\theoremstyle{plain}
\newtheorem{theorem}{Theorem}[section]
\newtheorem{lemma}[theorem]{Lemma}
\theoremstyle{definition}
\theoremstyle{remark}
\newcommand{\CommentedText}[1]{}
\newcommand{\E}{{\rm I\kern-.3em E}}
\newcommand{\Var}{\mathrm{\textbf{Var}}}
\newcommand\defeq{\mathrel{\stackrel{\makebox[0pt]{\mbox{\normalfont\scriptsize def}}}{:=}}}
\begin{document}

\begin{frontmatter}

\title{Estimating the Robustness Radius for Randomized Smoothing with 100$\times$ Sample Efficiency}

\author[A,B]{\fnms{Emmanouil}~\snm{Seferis}}
\author[B]{\fnms{Stefanos}~\snm{Kollias}}
\author[C]{\fnms{Chih-Hong}~\snm{Cheng}}

\address[A]{Fraunhofer IKS, Germany}
\address[B]{National Technical University of Athens, Greece}
\address[C]{Chalmers University of Technology, Sweden}

\begin{abstract} Randomized smoothing (RS) has successfully been used to improve the robustness of predictions for deep neural networks (DNNs) by adding random noise to create multiple variations of an input, followed by deciding the consensus. To understand if an RS-enabled DNN is effective in the sampled input domains, it is mandatory to sample data points within the operational design domain, acquire the point-wise certificate regarding robustness radius, and compare it with pre-defined acceptance criteria. Consequently, ensuring that a point-wise robustness certificate for any given data point is obtained relatively cost-effectively is crucial. This work demonstrates that reducing the number of samples by one or two orders of magnitude can still enable the computation of a slightly smaller robustness radius (commonly $\approx20\%$ radius reduction) with the same confidence. We provide the mathematical foundation for explaining the phenomenon while experimentally showing promising results on the standard CIFAR-10 and ImageNet datasets.

\end{abstract}

\end{frontmatter}

\section{Introduction}

Deep Neural Networks (DNNs) have achieved impressive results in tasks ranging from image and speech recognition~\cite{krizhevsky2017imagenet, graves2013speech}, language understanding~\cite{brown2020language}, or game playing~\cite{silver2018general}. They have also been the key enabling technology in realizing perception modules for autonomous driving (see~\cite{grigorescu2020survey,mozaffari2020deep,chen2021deep,pandharipande2023sensing} for recent survey). Nevertheless, the brittleness of the DNN~\cite{szegedy2013intriguing} has been one of the safety concerns~\cite{abrecht2023deep} contributing to the potential hazards. 

Among many of the hardening techniques,  \emph{Randomized Smoothing (RS)}~\cite{cohen2019certified,salman2019provably, yang2020randomized} is one of the promising model-agnostic methods to ensure the robustness of the DNN in the operational design domain (ODD), where the underlying idea is to make predictions based on the aggregation of outputs from multiple variations of the input data perturbed with random noise. For certification, however, it is imperative to understand how much an RS-enabled DNN can withstand noise, often characterized using the concept of \emph{robustness radius}, i.e.,  the degree of perturbation where the majority-vote prediction remains consistent. This activity must be applied to the set of representative data points sampled within the ODD. The immediate consequence is its high computational cost: with~$m$ data points sampled from the ODD and each data point being perturbed~$n$ times, the number of DNN executions needed equals $mn$, which can be tremendous. It is also not a single-step process, as the introduction of continuous assurance~\cite{schleiss2022towards,bensalem2023continuous}, safety certificates should be produced whenever the DNN module is updated. Thus, it is crucial to have highly efficient algorithms in computing the point-wise robustness certificate. 

In this paper, we investigate the efficient computation of robustness certificates for each data point. Currently, the estimation of point-wise robustness certificates for RS-enabled DNNs requires creating \emph{hundreds of thousands} of samples (i.e., $n \approx 100000$), as demonstrated by prior results~\cite{cohen2019certified, salman2019provably, carlini2022certified}.  We counteract with existing know-how and demonstrate that computing such a point-wise robustness certificate for RS-enabled DNNs can be done with substantially fewer samples, with a small reduction in the size of the derived robustness radius while having the same confidence level.  
We provide the theoretical explanation of such a phenomenon based on the Central Limit Theorem (CLT) serving as an approximation to the binomial distribution (and thereby to the Clopper-Pearson interval calculation), integrating Shore's numerical approximation~\cite{shore1982simple} on the inverse cumulative function of the normal distribution. The dual form of the theoretical result can be used as an \emph{early stopping criterion} to know if adding additional samples can lead to a significantly increased radius matching the acceptance criterion.  Finally, the theoretical results are confirmed by extensive experiments on the standard CIFAR-10 and ImageNet datasets, where reducing the number of samples by two to three orders of magnitude still allows for providing a point-wise robustness certificate of adequate radius.

In summary, the main contribution of this paper includes the following:

\begin{itemize}
    \item An empirical evaluation of understanding the amount of perturbed inputs and its effect on the derived robustness radius for RS-enabled DNNs.

    \item A theoretical interpretation of the observed phenomenon and the mathematical formulation between the reduction/addition of samples and the decrease/increase of robustness radius. 
 
\end{itemize}

\section{Related Work}

Robustness certification is an important aspect of the safe commissioning of learning-enabled systems. For DNNs, knowing the robustness bound can be done by viewing the DNN as a mathematical object (i.e., a program without loops). This enables formal verification techniques such as SMT~\cite{katz2017reluplex}, mixed-integer programming~\cite{cheng2017maximum,tjeng2017evaluating},  abstract interpretation~\cite{gowal2018effectiveness, gehr2018ai2, singh2019abstract}, or convex relaxations~\cite{wong2018scaling}. While formal methods are computationally expensive, by viewing the DNN as a composition of functions, one can also apply the Lipchitz analysis~\cite{leino2021globally, singla2021improved}  to understand the impact of perturbation, thereby forming an estimate on the robustness bound. While these methods are directly applied to the DNN, they can also serve as a non-probabilistic lower bound for RS-enabled DNN, as the robustness bound computed by these methods ensures that all variations will share the same result, thereby creating an uncontested (i.e., predictions under all noises are the consistent) majority. 

Randomized smoothing~\cite{cohen2019certified} currently represents the state-of-the-art sampling-based robustness prediction methods, as it scales to large DNNs used in practice and is agnostic to their architectural details. Moreover, RS has been extended to additional threat models beyond standard $L_2$ balls, such as general $L_p$ norms~\cite{yang2020randomized}, geometric transformations~\cite{fischer2020certified}, segmentation~\cite{fischer2021scalable} and more. Nevertheless, the amount of sample variations remains a practical concern. To this end, the closest approach to our work is in~\cite{chen2022input}, where the authors studied the effect of using small sampling numbers (e.g., $n=1000$); if the certified radius with the small sample size is larger than the accepted radius with high confidence, then there is no need to use more samples. They also qualitatively studied the effect of radius change concerning the sample size decrease from a constant. On the contrary, we start by considering the ideal case of infinite sampling (i.e., $n = \infty$) and formally derive the performance decrease when the sample size is small (e.g., $n=1000$). Our proved theorems allow us to analytically estimate the potential increase in the point-wise robustness radius (Thm.~\ref{thm:radius_drop}) as well as the average radius increase in the whole domain (Thm.~\ref{thm:aver_radius_drop};  subject to a certain shape commonly observed in the experiment) due to the increase of samples. The duality of this result enables an early stopping scheme when the increase of sample size is deemed ineffective in achieving the radius; their method will continue to adaptively increase the sample size without an end.

Finally, randomized smoothing is one of the driving forces for ensuring the robustness of prediction, where apart from majority votes, uncertainty can also be estimated. Other sampling-based methods such as MC-dropout~\cite{gal2016dropout} also sampling-based methods by randomly disabling some parameters. It is also interesting to know the robustness radius of these methods, which has not been explored to the best of our knowledge.

\section{Preliminaries: Randomized Smoothing}

Let $f: \mathbb{R}^d \rightarrow \{1, \ldots, K\}$ be a classifier mapping inputs $\mathbf{x} \in \mathbb{R}^d$ into $K$ classes. In RS, $f$ is replaced with the following classifier:

\begin{equation} \label{eq:g}
    g_{\sigma}(\mathbf{x}) = \underset{y}{\arg\max} \;\mathbb{P}[f(\mathbf{x} + \mathbf{z}) = y], \mathbf{z} \sim N(0, \sigma^2 I)
\end{equation}

That is, $g_{\sigma}$ perturbs the input $\mathbf{x}$ with noise $\mathbf{z}$ that follows an isotropic Gaussian distribution $N(0, \sigma^2 I)$, and returns the class $A\in \{1, \ldots, K\}$ that gets the majority vote, i.e., the one that $f$ is most likely to return on the perturbed inputs.

Let $p_A(\mathbf{x}, \sigma) \defeq  \mathbb{P}[f(\mathbf{x} + \mathbf{z}) = A], \mathbf{z} \sim N(0, \sigma^2 I)$ be the probability of the majority class being~$A$, where we use the term $p_A$ when the context is clear. If $p_A \geq 0.5$, then $g_{\sigma}$ is guaranteed to be robust around $\mathbf{x}$, where we define the (guaranteed) \emph{robustness radius} in such a situation as follows:

\vspace{-5mm}

\begin{equation}\label{eq:R}
 R_{\sigma}  \defeq \begin{cases}
\sigma \Phi^{-1}(p_A) &\text{if $p_A \geq  0.5$}\\
0 &\text{otherwise}
\end{cases}
\end{equation}

where $\Phi^{-1}$ is the inverse of the normal cumulative distribution function (CDF). The robustness radius~$R_{\sigma}$ ensures consistency of prediction, namely
$\forall \mathbf{z} \in \mathbb{R}^d: ||\mathbf{z} ||_{2}\leq R_{\sigma} \rightarrow g_{\sigma}(\mathbf{x}) = g_{\sigma}(\mathbf{x}+\mathbf{z})$. The intuition is that a slight perturbation on $\mathbf{x}$ can change the output of $f$ arbitrarily, but not the one of~$g_{\sigma}$: since $g_{\sigma}$ relies on the consensus over points distributed around $\mathbf{x}$, a small shift cannot change a distribution much. This is the crucial fact where RS resides. 

Finally, notice that finding the precise value of $p_A$ is not possible; however, a lower bound $\bar{p_A}$ can be estimated by Monte Carlo sampling with a high degree of confidence $1 - \alpha$, as demonstrated in line~6 of Algo.~\ref{alg:certify} (technical details will be expanded in later sections). Yet, following existing results~\cite{cohen2019certified, salman2019provably, carlini2022certified}, the required number of samples~$n$ are typically around~$10000$ to~$100000$. 

\begin{algorithm}[tb]
   \caption{Finding the robustness radius (adapted from~\cite{cohen2019certified})}
   \label{alg:certify}
\begin{algorithmic}[1]
   \STATE {\bfseries Input:} point $\mathbf{x}$, classifier $f$, $\sigma$, $n$, $\alpha$
   \STATE {\bfseries Output:} class $A$ and robustness radius $R_{\sigma}$ of $\mathbf{x}$ 
   \STATE sample $n$ noisy samples $\mathbf{x}_1', ..., \mathbf{x}_n' \sim N(\mathbf{x}, \sigma^2 I)$
   \STATE get majority class $A \gets \arg\max_y \sum_{i=1}^n \mathbf{1}[f(\mathbf{x}_i') = y]$
   \STATE $\text{counts}(A) \leftarrow \sum_{i=1}^n \mathbf{1}[f(\mathbf{x}_i') = A]$
   \STATE $\bar{p_A} \leftarrow \text{LowerConfBound}(\text{counts}(A), n, \alpha)$ \COMMENT{compute probability lower bound}
   \IF{$\bar{p_A} \geq \frac{1}{2}$}
   \STATE return $A, \sigma \Phi^{-1}(\bar{p_A})$ 
   \ELSE
   \STATE return $\textsf{ABSTAIN}$
   \ENDIF
\end{algorithmic}
\end{algorithm}

\section{Reducing the Sample Size}\label{sec:method}

In this section, we perform the theoretical analysis\footnote{For important results, the created theorems use $\approx$ (approximately equal) to omit the error terms introduced by numerical approximation; it is an easy exercise to precise all terms, but the resulting formula would be too complex for the reader to grasp the big picture. } concerning the effect of reducing the sample size in Algo.~\ref{alg:certify} on the robustness radius and average certified accuracy. The critical point to explore is the dependence of the lower bound $\bar{p_A}$ and the number of samples~$n$ (as well as the admissible error rate~$\alpha$).

\subsection{The General Approach}
Let $p_A$ be probability of input~$\mathbf{x}$ being  class~$A$, when being fed into~$g_{\sigma}$ (Eq.~\eqref{eq:g}). As $p_A$ is unknown to us, the aim is to estimate it by drawing samples, with the goal of obtaining a lower bound $\bar{p_A}$ where the true probability is larger than $\bar{p_A}$ with confidence at least~$1 - \alpha$. This interprets the meaning of Line~$6$ in Algo.~\ref{alg:certify}. 

More specifically, let $\mathbf{x}_i' \sim N(\mathbf{x}, \sigma^2 I)$ be noisy versions of $\mathbf{x}$ ($i = 1,...,n$), and set $Y_i = \mathbf{1}[f(\mathbf{x}_i') = A]$; $Y_i$ is an indicator Random Variable (RV), taking the value $1$ if $f(\mathbf{x}_i')$ predicts the correct class~$A$, and~$0$ otherwise. Notice that $Y_i$'s are binomial RVs, with success probability $p_A$. Further, let $\hat{p} = \frac{Y_1 + ... + Y_n}{n}$ be the sample mean, i.e., the empirical estimate of $p_A$.

Given $\hat{p}$, $n$ and $\alpha$, there are many ways to estimate the lower bound as used line~$6$ of Algo.~\ref{alg:certify}. One standard approach is to apply the Clopper-Pearson test~\cite{clopper1934use} to obtain a lower bound, a term we call~$\bar{p_A}^{CP}$.
Unfortunately, while the Clopper-Pearson test gives us an exact lower bound for binomials, there is no analytical closed-form solution. Therefore, to obtain an algebraically tractable approximation, we apply the Central Limit Theorem (CLT) as our main tool~\cite{wasserman2004all}, 
which states that the distribution of sample means approximates a normal distribution as the sample size gets larger ($n \geq 30$), with mean $\E[\hat{p}] = p_A$ and variance $\Var[\hat{p}] = \frac{p_A (1 - p_A)}{n}$: 

\begin{equation} \label{eq:CLT}
    \hat{p} \sim N\left(p_A, \frac{p_A (1 - p_A)}{n}\right)  
\end{equation}

Before proving the lower-bound, we now establish a lemma characterizing the approximation of expectations over functions.

\begin{lemma}
\label{lemma:lemma_fEX}
Let $X$ be an RV with finite mean and variance, and let~$f$ be a twice continuously differentiable function, with $|f''(x)| \leq 2M$ for all $x \in \mathbb{R}$. Then the following condition holds.

\begin{equation}
    \label{eq:lemma_fEX}
    f(\E[X]) - M \cdot \mathrm{\emph{\textbf{Var}}}[X] \leq \E[f(X)] \leq f(\E[X]) + M \cdot \mathrm{\emph{\textbf{Var}}}[X] 
\end{equation}

\end{lemma}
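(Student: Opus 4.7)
The plan is to use a second-order Taylor expansion of $f$ around the point $\mu := \E[X]$ and then take expectations. By Taylor's theorem with the Lagrange form of the remainder, for each realization $x$ of $X$ there is a point $\xi(x)$ lying between $x$ and $\mu$ such that
\[
  f(x) = f(\mu) + f'(\mu)(x - \mu) + \tfrac{1}{2} f''(\xi(x))(x - \mu)^2.
\]
Substituting $X$ for $x$ and taking expectations, the linear term vanishes because $\E[X - \mu] = 0$, leaving
\[
  \E[f(X)] - f(\mu) \;=\; \tfrac{1}{2}\, \E\!\left[ f''(\xi(X))\,(X - \mu)^2 \right].
\]

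Next I would bound the remainder using the hypothesis $|f''| \leq 2M$. This gives $|f''(\xi(X))(X-\mu)^2| \leq 2M (X-\mu)^2$, and taking expectations yields
\[
  \bigl|\E[f(X)] - f(\mu)\bigr| \;\leq\; M \cdot \E[(X - \mu)^2] \;=\; M \cdot \Var[X],
\]
which is exactly the two-sided inequality stated in the lemma.

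The only subtlety (and therefore the mildly annoying step rather than a genuine obstacle) is justifying that $\E[f''(\xi(X))(X-\mu)^2]$ is well-defined, since $\xi(X)$ is itself a random quantity obtained from the mean value theorem. This is handled by observing that the integrand is dominated pointwise by the integrable random variable $2M (X - \mu)^2$ (integrable because $\Var[X] < \infty$), so measurable selection of $\xi$ together with dominated convergence secures the exchange of expectation and Taylor remainder. With that in place, the computation above is routine and the lemma follows.
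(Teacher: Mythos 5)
Your proof is correct and follows essentially the same route as the paper: a second-order Taylor expansion around $\E[X]$ with the Lagrange remainder, vanishing of the linear term, and the bound $|f''|\leq 2M$ yielding $M\cdot\Var[X]$. The only difference is that you explicitly address the measurability of the intermediate point $\xi(X)$, a detail the paper's proof sidesteps by working with the two-sided pointwise inequality before taking expectations.
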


\begin{proof}
    Since $f$ is twice continuously differentiable on the open interval with $f''$ continuous on the closed interval between $x_0$ and $x$, using Taylor's theorem with the Lagrange form of remainder,  one derives:

    \vspace{-3mm}
    \begin{equation}
        f(x) = f(x_0) + f'(x_0) (x - x_0) + \frac{1}{2} f''(\xi) (x - x_0)^2
    \end{equation}

    with $\xi \in (x_0, x)$. Since $|f''(x)| \leq 2M$ for all $x \in \mathbb{R}$, the above gives the following inequality:

    \vspace{-3mm}
    \begin{equation}\label{eq:inequality.1}
        \begin{split}
            f(x_0) + f'(x_0) (x - x_0) - M (x - x_0)^2 \leq f(x) \\
            \leq f(x_0) + f'(x_0) (x - x_0) + M (x - x_0)^2
        \end{split}
    \end{equation}

    Now, replace  $x$ by $X$ and $x_0 $ by $\E[X]$ in Eq.~\eqref{eq:inequality.1}, and take expectations on both sides, we get the following:

    \vspace{-3mm}
    \begin{equation}\label{eq:inequality.2}
        \begin{split}
            \E[f(\E[X])] + \E[f'(\E[X]) (X - \E[X])] - \E[M (X - \E[X])^2] \\ \leq \E(f(X)) \\
            \leq       \E[f(\E[X])] + \E[f'(\E[X]) (X - \E[X])] + \E[M (X - \E[X])^2] 
        \end{split}
    \end{equation}

Eq.~\eqref{eq:inequality.2} can be simplified to  Eq.~\eqref{eq:lemma_fEX}, due to the following facts. 
   \begin{itemize}
       \item $\E(f(\E[X])) = f(\E[X])$, 
       \item $\E[f'(\E[X])(X - \E[X])] = f'(\E[X]) \E[X - \E[X]] = \\ f'(\E[X]) (\E[X] - \E[X]) = 0$, and 
       \item $\E[M (X - \E[X])^2] = M \Var[X]$
   \end{itemize} 

       \vspace{-3mm}
\end{proof}

By applying the CLT via Eq.~\eqref{eq:CLT}, we derive a lower-bound for~$p_A$ as follows:

\begin{lemma} \label{thm:lower_bound_clt}
Let $Y_1, ..., Y_n$ be Bernoulli RVs, with success probability $p_A$ where $0 < p_l \leq p_A \leq p_h < 1$ with $p_l, p_h$ constants, and $\hat{p} = \frac{Y_1 + ... + Y_n}{n}$. Assume~$n \geq 30$ such that CLT holds. Then the following two conditions hold:
\begin{enumerate}

    \item $\bar{p_A}^{CP} \approx \hat{p} - z_{\alpha} \sqrt{ \frac{\hat{p} (1 - \hat{p})}{n} } $, where $z_{\alpha} = \Phi^{-1}(1 - \frac{\alpha}{2})$ is the $1 - \frac{\alpha}{2}$ quantile of the normal distribution~$N(0,1)$.

    \item $\E[\bar{p_A}^{CP}]$, i.e., the expected value of $\bar{p_A}^{CP}$, is equal to $p_A - z_{\alpha} \sqrt{ \frac{p_A (1 - p_A)}{n} } + \delta$, where $ \delta \in [-c \Var[\hat{p}], c \Var[\hat{p}]]$ with~$c$ being a constant.
\end{enumerate}
\end{lemma}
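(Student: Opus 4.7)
The statement breaks naturally into an algebraic identity (part~1) and an expectation calculation that feeds part~1 through Lemma~\ref{lemma:lemma_fEX} (part~2). For part~(1), the plan is to invoke the CLT approximation in Eq.~\eqref{eq:CLT}, treating $\hat{p}$ as approximately $N(p_A, p_A(1-p_A)/n)$. Starting from the one-sided event $\hat{p} - p_A \leq z_\alpha \sqrt{p_A(1-p_A)/n}$---which holds with probability matching the quantile $z_\alpha = \Phi^{-1}(1-\alpha/2)$ used in the statement---I would solve for $p_A$ to obtain a lower bound, then perform the usual \emph{plug-in} step of replacing the unknown $p_A(1-p_A)/n$ under the square root by its empirical counterpart $\hat{p}(1-\hat{p})/n$. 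This plug-in introduces an $O(1/n)$ deviation absorbed into the $\approx$ symbol, yielding the Wald-form lower bound claimed.

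For part~(2), the plan is to take expectations in the formula from part~(1) and reduce the calculation to one about $\E[f(\hat{p})]$, where $f(x) := \sqrt{x(1-x)/n}$. Since $\E[\hat{p}] = p_A$, one has
\[
\E[\bar{p_A}^{CP}] \;\approx\; p_A \;-\; z_\alpha\, \E[f(\hat{p})].
\]
Lemma~\ref{lemma:lemma_fEX} then gives $\E[f(\hat{p})] = f(p_A) + \delta'$ with $|\delta'| \leq M\Var[\hat{p}]$, and setting $\delta := -z_\alpha \delta'$ delivers the claimed decomposition with the constant $c := z_\alpha M$.

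The main obstacle is verifying the hypothesis $|f''| \leq 2M$ required by Lemma~\ref{lemma:lemma_fEX}. A direct differentiation yields
\[
f''(x) \;=\; -\,\frac{1}{4\sqrt{n}\,\bigl(x(1-x)\bigr)^{3/2}},
\]
which blows up as $x \to 0^+$ or $x \to 1^-$, so no \emph{global} bound on $\mathbb{R}$ exists. This is exactly why the hypothesis $0 < p_l \leq p_A \leq p_h < 1$ is imposed: it pins $p_A$ strictly inside $(0,1)$, and since the CLT forces $\hat{p}$ to concentrate within $O(1/\sqrt{n})$ of $p_A$, one may restrict attention to a compact sub-interval on which $|f''|$ admits a uniform bound depending only on $p_l$ and $p_h$. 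The contribution of the exponentially small tail where $\hat{p}$ escapes this interval (controlled by a Hoeffding-type estimate) can be absorbed into the overall approximation error captured by the $\approx$ symbol, and the proof closes.
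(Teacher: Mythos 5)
Your proof follows essentially the same route as the paper's: part (1) is the standard Wald/normal-interval approximation to the Clopper--Pearson bound under the CLT, and part (2) applies Lemma~\ref{lemma:lemma_fEX} to the nonlinear function of $\hat{p}$ (you apply it only to $\sqrt{x(1-x)/n}$ and fold the linear term and the factor $z_\alpha$ back in afterwards, which is an immaterial repackaging of the paper's choice $f(p)=p-z_\alpha\sqrt{p(1-p)/n}$, whose second derivative matches yours up to the factor $z_\alpha$). The one place you go beyond the paper is in observing that $|f''|$ blows up near $0$ and $1$ while $\hat{p}$ can take values there, and patching this by restricting to a compact sub-interval via concentration of $\hat{p}$ and absorbing the Hoeffding-controlled tail into the $\approx$; the paper only asserts boundedness of $|f''(p)|$ for $p\in[p_l,p_h]$ without addressing the full range of $\hat{p}$, so your treatment is the more careful of the two.
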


\begin{proof}
The first item is the standard normal interval approximation for the binomial, under the CLT approximation~\cite{brown2001interval}. For the second item, consider the function $f(p) = p - z_a \sqrt{\frac{p(1-p)}{n}}$. For $0 < p_l \leq p_A \leq p_h < 1$, $|f''(p)| =  \frac{z_a}{4 \sqrt{n} [p(1-p)]^{3/2} }$ is bounded by some constant $c$. 

By taking Lemma~1 where $X$ is assigned with~$\hat{p}$ and $M$ with $c$, we obtain

\begin{equation}\label{eq:moving.inside}
    f(\E[\hat{p}]) -  c\Var[\hat{p}] \leq \E[f(\hat{p})] \leq f(\E[\hat{p}]) + c\Var[\hat{p}]
\end{equation}

By applying condition 1,  using the definition of~$f$, and applying Eq.~\ref{eq:moving.inside}, we obtain the following. 
\begin{align*}
    \E[\bar{p_A}^{CP}] \approx \E[\hat{p} - z_{\alpha} \sqrt{ \frac{\hat{p} (1 - \hat{p})}{n} } ] = \E[f(\hat{p})]   \Rightarrow \\ 
    \E_{\hat{p}} [f(\hat{p})] \in [ f(\E[\hat{p}]) -  c\Var[\hat{p}],  f(\E[\hat{p}]) +  c\Var[\hat{p}]] 
\end{align*}

Finally, as $\E_{\hat{p}}[\hat{p}]$ equals $p_A$ (following the definition in Eq.~\ref{eq:CLT}), we can derive $\E_{\hat{p}}[\bar{p_A}^{CP}] \approx p_A - z_{\alpha} \sqrt{ \frac{p_A (1 - p_A)}{n} } + \delta$ where
$\delta \in [-c\Var[\hat{p}], c\Var[\hat{p}]]$, establishing the validity of the second condition.

\end{proof}

\paragraph{(Remark)} In Lemma~\ref{thm:lower_bound_clt}, the assumption on $\delta$ being negligible is reasonable in practice, e.g., $\delta \in [-0.0006, 0.0006]$ even for~$p_A = 0.95$, with $n = 1000$. 

Fig.~\ref{fig:clopper_vs_gauss} illustrates the lower-bound computed using Clopper-Pearson method and Lemma~\ref{thm:lower_bound_clt}. For each point in the line, it is created by fixing the~$n$ value (number of samples), repeating the trial for~$100$ times, followed by taking the average. We observe that the distance between them is small, and decreases rapidly as~$n$ exceeds~$100$.

\begin{figure}[t]
\centering
\vspace{-7mm}
\includegraphics[width=1.0\columnwidth]{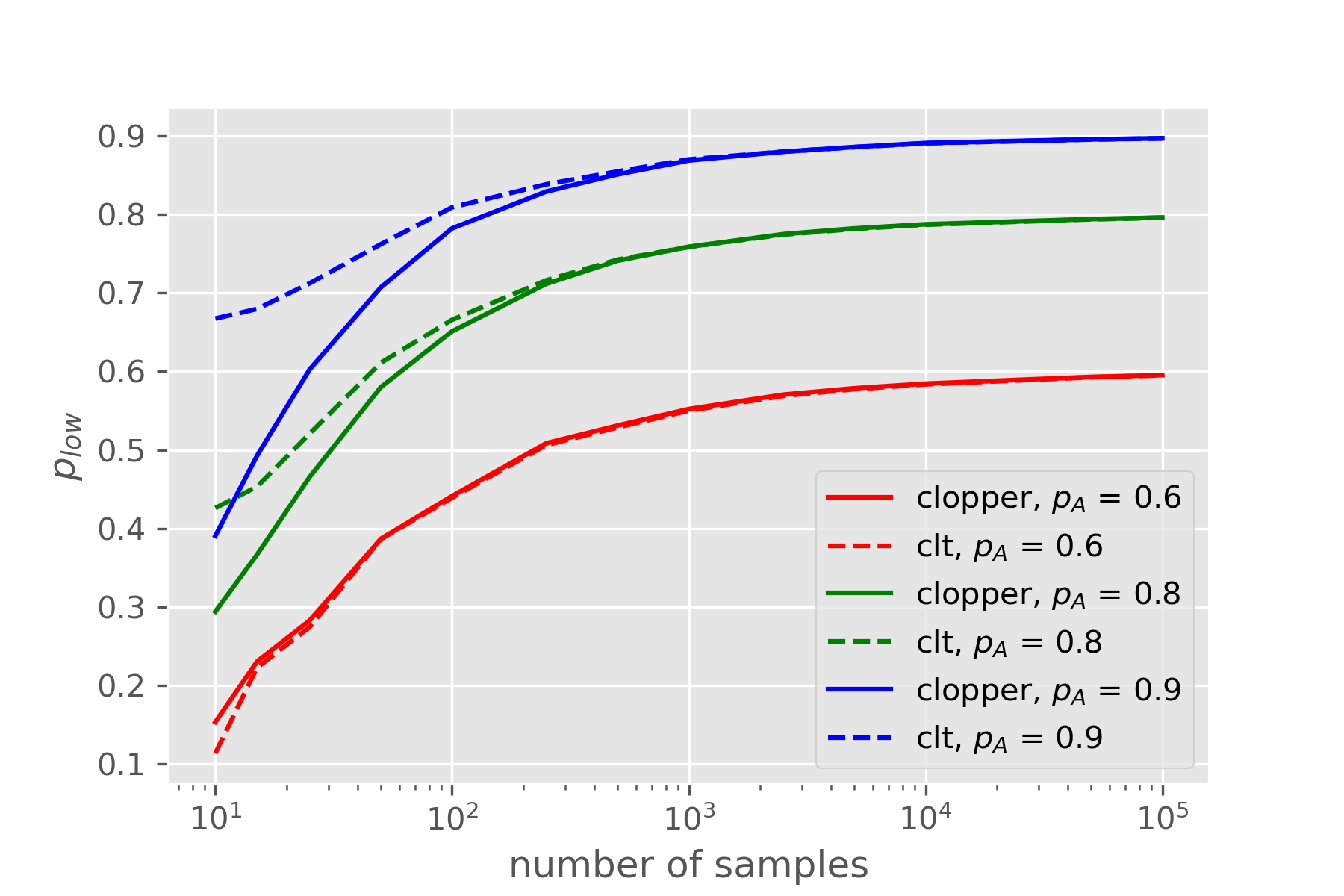}
\caption{Plot of the average lower bounds for the Clopper-Pearson method and Lemma~\ref{thm:lower_bound_clt}, obtained over~$100$ trials, for multiple values of~$p_A$ and~$n$.}
\label{fig:clopper_vs_gauss}
\end{figure}

\subsection{The Impact of Sample Size on Robustness Radius}

We now study how the sample size influences the robustness radius. Let $R_{\sigma}^{\alpha, n}(p_A)$ be the expected robustness radius estimated with~$n$ samples and error rate~$\alpha$ subject to noise level~$\sigma$, provided that the original probability is~$p_A$ but being approximated by $\bar{p_A}^{CP}$. That is,  $R_{\sigma}^{\alpha, n}(p_A) \defeq \E[\sigma \Phi^{-1}(\bar{p_A}^{CP})]$. Ideally, if we had an infinite number of samples, $\bar{p_A}^{CP}$ will be equal to~$p_A$, and the robustness radius would be, by Eq.~\eqref{eq:R}: $R_{\sigma} = R_{\sigma}^{0,\infty}(p_A) = \sigma \Phi^{-1}(p_A)$. 
However, for a finite number of samples, we do not have access to~$p_A$, but to a lower bound of it, as we saw before. 

To study how much the robustness radius, $R_{\sigma}^{\alpha, n}(p_A)$, drops due to the reduction of finite samples, we use the following approximation stated in Eq.~\eqref{eq:invCDF_approx} for~$\Phi^{-1}(p)$, which is valid for~$p \geq \frac{1}{2}$ \cite{shore1982simple}. In our case, the validity assumption for~$p \geq \frac{1}{2}$ is not a problem, as otherwise, if $p_A < \frac{1}{2}$, the robustness radius is $0$, implying that~$g_{\sigma}$ failed to predict the correct class (which is by common sense, undesired).

\begin{equation}\label{eq:invCDF_approx}
   \Phi^{-1}(p) \approx \frac{1}{0.1975} [p^{0.135} - (1 - p)^{0.135}]
\end{equation}

By applying Lemma~\ref{thm:lower_bound_clt} followed by integrating Eq.~\eqref{eq:invCDF_approx}, we can examine the influence $n$ has on $R_{\sigma}^{\alpha, n}(p_A)$, which leads to the following theorem.

\begin{theorem} \label{thm:radius_drop}
For an RS classifier~$g_{\sigma}$, given a point $\mathbf{x}$, assume that we estimate~$p_A$ by drawing~$n$ samples and compute the lower bound from the empirical $\hat{p}$ with confidence~$1-\alpha$. Assume that the conditions in Lemma~\ref{thm:lower_bound_clt} hold, while~ $|\frac{d^2 \Phi^{-1}(p) }{dp^2}| \Var[\hat{p}]$ and~$\delta$ in Lemma~\ref{thm:lower_bound_clt} are negligible. Then

\begin{equation}
    \label{eq:radius_drop_def}
    R_{\sigma}^{\alpha, n}(p_A) \approx \sigma \Phi^{-1} (p_A - t_{\alpha, n}) 
\end{equation}
where $t_{{\alpha}, n} = z_{{\alpha}} \sqrt{ \frac{p_A (1 - p_A)}{n} }$. In addition, $R_{\sigma}^{\alpha, n}(p_A)$ is approximately equal to:

\vspace{-3mm}
\begin{equation} \label{eq:radius_drop}
\begin{split}
    R_{\sigma}^{\alpha, n}(p_A) \approx 5.063 \sigma [ p_A^{0.135} - (1 - p_A)^{0.135} - \\
        0.135 \frac{z_{{\alpha}}}{\sqrt{n}} (p_A^{-0.365} (1 - p_A)^{1/2} + p_A^{1/2} (1 - p_A)^{-0.365}) ]
\end{split}
\end{equation}
\end{theorem}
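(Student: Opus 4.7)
The plan is to derive the first equation of the theorem by pushing the expectation through $\Phi^{-1}$ using Lemma~\ref{lemma:lemma_fEX}, then composing with the expression for $\E[\bar{p_A}^{CP}]$ from Lemma~\ref{thm:lower_bound_clt}, and finally expanding the resulting $\Phi^{-1}(p_A - t_{\alpha,n})$ through the Shore approximation of Eq.~\eqref{eq:invCDF_approx} together with a first-order Taylor expansion in the small quantity $t_{\alpha,n}$.

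First I would recall that by definition $R_{\sigma}^{\alpha,n}(p_A) = \E[\sigma \Phi^{-1}(\bar{p_A}^{CP})]$ and apply Lemma~\ref{lemma:lemma_fEX} with $X = \bar{p_A}^{CP}$ and $f = \sigma\Phi^{-1}$. The hypothesis that $|\tfrac{d^2\Phi^{-1}(p)}{dp^2}|\Var[\hat{p}]$ is negligible (stated in the theorem) plays exactly the role of the bound $M$ in Lemma~\ref{lemma:lemma_fEX}, so the two-sided inequality collapses to $\E[\sigma\Phi^{-1}(\bar{p_A}^{CP})] \approx \sigma\Phi^{-1}(\E[\bar{p_A}^{CP}])$. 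Invoking condition~2 of Lemma~\ref{thm:lower_bound_clt} and dropping $\delta$ (also assumed negligible), I obtain $\E[\bar{p_A}^{CP}] \approx p_A - z_\alpha\sqrt{p_A(1-p_A)/n} = p_A - t_{\alpha,n}$, which establishes Eq.~\eqref{eq:radius_drop_def}.

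For Eq.~\eqref{eq:radius_drop} I would substitute Shore's approximation into $\Phi^{-1}(p_A - t_{\alpha,n})$, yielding
\begin{equation*}
\Phi^{-1}(p_A - t_{\alpha,n}) \approx \tfrac{1}{0.1975}\bigl[(p_A - t_{\alpha,n})^{0.135} - (1 - p_A + t_{\alpha,n})^{0.135}\bigr],
\end{equation*}
and then perform a first-order Taylor expansion of each power term around $p_A$ and $1-p_A$, respectively: $(p_A - t_{\alpha,n})^{0.135} \approx p_A^{0.135} - 0.135\, p_A^{-0.865} t_{\alpha,n}$ and $(1-p_A+t_{\alpha,n})^{0.135} \approx (1-p_A)^{0.135} + 0.135\,(1-p_A)^{-0.865} t_{\alpha,n}$. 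Substituting $t_{\alpha,n} = \tfrac{z_\alpha}{\sqrt{n}}\, p_A^{1/2}(1-p_A)^{1/2}$ consolidates the exponents into $p_A^{-0.365}(1-p_A)^{1/2}$ and $p_A^{1/2}(1-p_A)^{-0.365}$, and multiplying by $\sigma/0.1975 \approx 5.063\sigma$ yields Eq.~\eqref{eq:radius_drop}.

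The main obstacle is the first step: justifying that the expectation can be moved inside $\Phi^{-1}$. Since $\Phi^{-1}$ has unbounded second derivative as $p \to 0$ or $p \to 1$, Lemma~\ref{lemma:lemma_fEX} does not apply globally, and the argument has to rely on the localization of $\bar{p_A}^{CP}$ around $p_A \in [p_l, p_h] \subset (0,1)$, where $|\tfrac{d^2\Phi^{-1}(p)}{dp^2}|$ is bounded and the product with $\Var[\hat{p}] = p_A(1-p_A)/n$ is small for $n$ large. The Taylor expansion in the second step is routine but needs $t_{\alpha,n}$ small relative to $\min(p_A, 1-p_A)$, which is again ensured by the assumption $n \geq 30$ together with $p_A$ bounded away from $0$ and $1$; I would remark on this implicit regime at the end of the proof.
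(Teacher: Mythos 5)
Your proposal is correct and follows essentially the same route as the paper's proof: push the expectation inside $\Phi^{-1}$ via Lemma~\ref{lemma:lemma_fEX}, apply condition~2 of Lemma~\ref{thm:lower_bound_clt} with $\delta$ negligible to get $\sigma\Phi^{-1}(p_A - t_{\alpha,n})$, then substitute Shore's approximation and expand to first order (your direct Taylor expansion in $t_{\alpha,n}$ is algebraically identical to the paper's factor-out-and-binomial-expand step, and the exponent bookkeeping checks out). Your closing remark on the validity regime is a point the paper only addresses informally in a remark, but it does not change the argument.
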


The practical usage of Thm.~\ref{thm:radius_drop} occurs when we have already used $n$ samples to compute the radius. With~$\sigma$ being a constant, under a given specified~$\alpha$, one can reverse-engineer to uncover the appropriate $p_A$ value making Eq.~\ref{eq:radius_drop} holds. This allows us to predict the robustness radius increase when one changes from $n$ to substantially larger values such as~$100n$. The maximum increase occurs when $n \rightarrow \infty$, where the robustness radius is approximately equal to the term in Eq.~\eqref{eq:r.a.infty}.

\vspace{-2mm}
\begin{equation}\label{eq:r.a.infty}
 R_{\sigma}^{0, \infty}(p_A)  \approx   5.063 \;\sigma [ p_A^{0.135} - (1 - p_A)^{0.135}]
\end{equation} 

\begin{proof}

As the condition of Lemma~\ref{thm:lower_bound_clt} holds, $\bar{p_A}^{CP} \approx \hat{p} - t_{\alpha, n}$. Using Eq.~\eqref{eq:lemma_fEX}, we get 
\begin{equation}
\begin{split}
\sigma \Phi^{-1}(\E[\bar{p_A}^{CP}]) -  M \Var[\hat{p}] \\\leq R_{\sigma}^{\alpha, n}(p_A) = \E[\sigma \Phi^{-1}(\bar{p_A}^{CP})] \\
\sigma \Phi^{-1}(\E[\bar{p_A}^{CP}]) +  M \Var[\hat{p}]
\end{split}
\end{equation}

where $M$ is the upper bound of $|\frac{d^2 \Phi^{-1}(p) }{dp^2}|$ in the interval $[p_l, p_h)$. As $|\frac{d^2 \Phi^{-1}(p) }{dp^2}|\Var[\hat{p}]$ is  negligible, we have: 
\begin{equation}
\begin{split}
    R_{\sigma}^{\alpha, n}(p_A) = \E[\sigma \Phi^{-1}(\bar{p_A}^{CP})] \approx \sigma \Phi^{-1}(\E[\bar{p_A}^{CP}]) 
\end{split}
\end{equation}
By applying the second condition of Thm~\ref{thm:lower_bound_clt} and with $\delta$ being negligible, we have the following.

\begin{equation}
\begin{split}
    R_{\sigma}^{\alpha, n}(p_A) \approx \sigma \Phi^{-1}(\mathbf{E}_{\hat{p}}[\bar{p_A}^{CP}]) \approx \sigma \Phi^{-1}(p_A - t_{\alpha,n}) 
\end{split}
\end{equation}

Next, we replace $\Phi^{-1}$ by the approximation of Eq.~\eqref{eq:invCDF_approx}, and we get:

\begin{equation} \label{eq:cert_radius_approx}
\begin{split}
    R_{\sigma}^{\alpha, n}(p_A) \approx \sigma \frac{1}{0.1975} [(p_A - t_{\alpha, n})^{0.135} - (1 - p_A + t_{\alpha, n})^{0.135}]
\end{split}
\end{equation}

For further simplification, we use binomial theorem, $(1 + x)^a = 1 + ax + \frac{a(a-1)}{2!} x^2 + ...$ valid for $|x| < 1$ on both terms of Eq.~\eqref{eq:cert_radius_approx}, and keep only the 1st order terms. Doing that gives:

\begin{equation}\label{eq:binom_approx}
\begin{split}
    & A \defeq \left( p_0 - z_{\alpha} \sqrt{ \frac{p_A (1 - p_A)}{n} } \right)^{0.135} \\
    & = p_A^{0.135} \left( 1 - \frac{z_{\alpha}}{\sqrt{n}} p_A^{-1/2} (1 - p_A)^{1/2} \right)^{0.135} \Rightarrow \\ 
    & A \approx p_A^{0.135} (1 - 0.135 \frac{z_{\alpha}}{\sqrt{n}} p_A^{-1/2} (1 - p_A)^{1/2}) = 
                p_A^{0.135} \\
    &- 0.135 \frac{z_{\alpha}}{\sqrt{n}} p_A^{-0.365} (1 - p_A)^{1/2} \\
    & B \defeq \left(1 - p_A + z_{\alpha} \sqrt{ \frac{p_A (1 - p_A)}{n}} \right)^{0.135} = \\
    & (1 - p_A)^{0.135}  \left( 1 + \frac{z_{\alpha}}{\sqrt{n}} p_A^{1/2} (1 - p_A)^{-1/2} \right)^{0.135} \Rightarrow \\  
    & B \approx (1 - p_A)^{0.135} (1 + 0.135 \frac{z_{\alpha}}{\sqrt{n}} p_A^{1/2} (1 - p_A)^{-1/2}) \\
    & = (1 - p_A)^{0.135} + 0.135 \frac{z_{\alpha}}{\sqrt{n}} p_A^{1/2} (1 - p_A)^{-0.365}
\end{split}
\end{equation}

Substituting in Eq.~\eqref{eq:cert_radius_approx} and combining terms gets Eq.~\eqref{eq:radius_drop}.

\end{proof}

\paragraph{(Remark)} In Thm.~\ref{thm:radius_drop}, the assumption on $|\frac{d^2 \Phi^{-1}(p) }{dp^2}| \Var[\hat{p}]$ being negligible is reasonable, as $\Var[\hat{p}] \defeq \frac{p_A (1 - p_A)}{n}$, and when~$n$ is around~$1000$, the value can at most be~$0.00025$.
The second derivative of inverse normal CDF
$|\frac{d^2 \Phi^{-1}(p) }{dp^2}|$, when~$p$ is not too close to~$1$, is reasonably sized. For example, when $p=0.9$, $|\frac{d^2 \Phi^{-1}(p) }{dp^2}|= 27.77$, making the product term $|\frac{d^2 \Phi^{-1}(p) }{dp^2}| \Var[\hat{p}] = 0.0069$ still small. 
We observe in the experiments that even when~$n$ is not very big (cf. Sec.~\ref{sec:experiments}), the approximation and the observed behavior remain similar.

\subsection{Average Robustness Radius Drop}
In the previous section, we examine the effect of $n$ on the robustness radius for one specific point. But ultimately, what really interests us is the effect on the average robustness radius over the entire dataset. This is the average robustness radius that we expect to lose by reducing the number of samples. 

For any particular point, the robustness radius depends on $p_A$, the probability that $g_{\sigma}$ outputs the correct class $A$ (Eq.~\eqref{eq:R}). To answer questions about the average behavior of the robustness radius, we need to consider the probability distribution of $p_A$: we devote the probability density function (pdf) of $p_A$ to be~$\Pr(p_A)$. We can roughly imagine $\Pr(p_A)$ as a histogram over the $p_A$ values obtained from our dataset. 

Formally, the average robustness radius is then given by Eq.~\eqref{eq:def.average.radius}. The integration can start at $0.5$, as the robustness radius $R_{\sigma}^{\alpha, n}(p_A)$ equals $0$ for $p_A < 0.5$ (cf. Eq.~\eqref{eq:R}).

\vspace{-5mm}
\begin{equation}\label{eq:def.average.radius}
\begin{split}
    \bar{R}_{\sigma}(\alpha, n) \defeq \E_{\Pr(p_A)} [R_{\sigma}^{\alpha, n}(p_A)] \\
    = \int_{0}^{1} R_{\sigma}^{\alpha, n}(p_A) \Pr(p_A) dp_A  
    = \int_{0.5}^{1} R_{\sigma}^{\alpha, n}(p_A) \Pr(p_A) dp_A  
    \end{split}
\end{equation}

\begin{figure}[t]
\centering
\includegraphics[width=1.0\columnwidth]{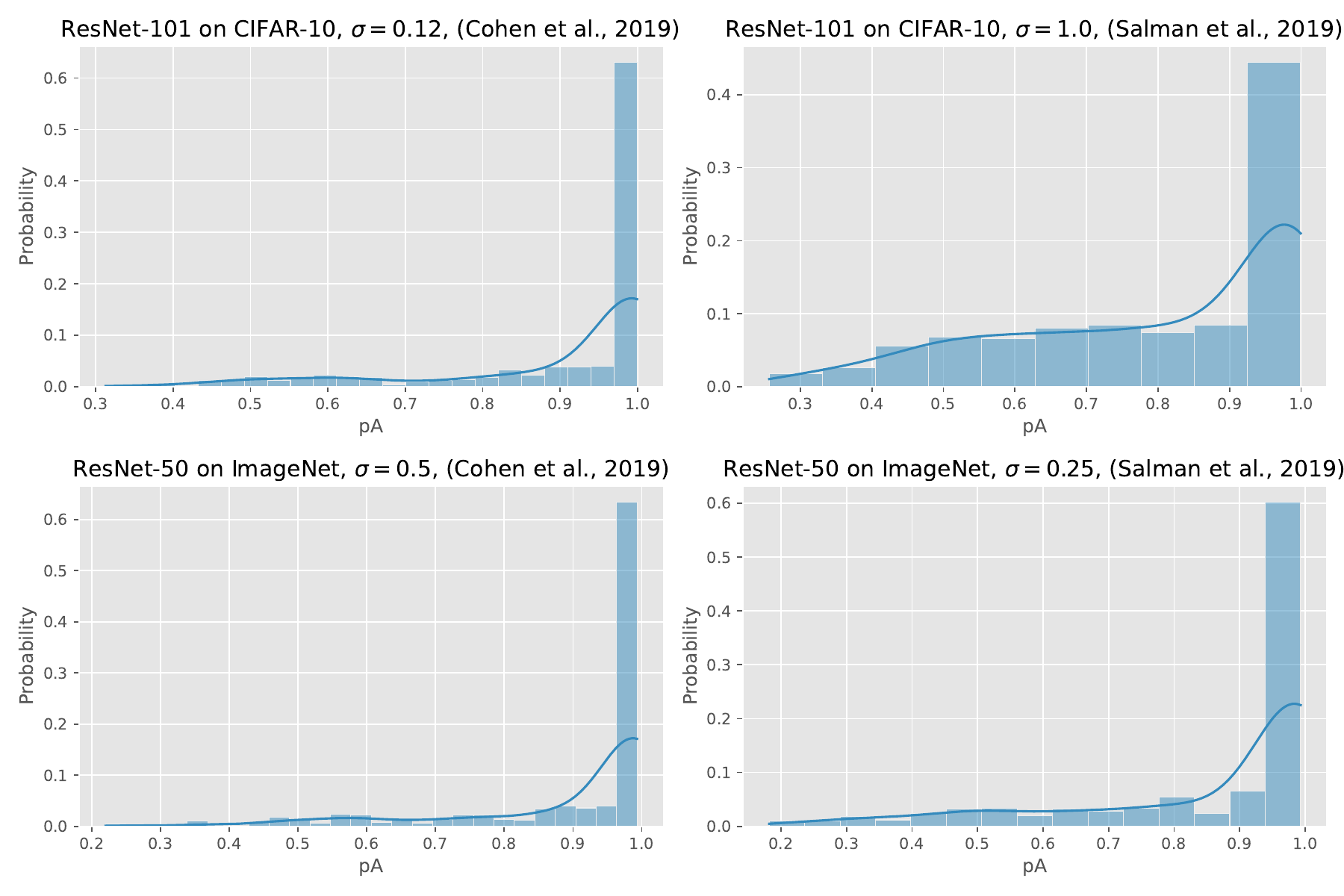}
\caption{Plots of histograms and density plots of $p_0$ obtained for different models and datasets, as shown in the figure titles. The values of $p_0$ were estimated empirically using $n = 100000$ samples.}. \label{fig:pA_distributions}

\vspace{-3mm}

\end{figure}

Unfortunately, $\Pr(p_A)$ depends heavily on the particular model and dataset used, and does not seem to follow any well-known family of distributions such as Gaussian. This can be seen in Fig.~\ref{fig:pA_distributions}, where we estimate the histogram of $p_A$ for different models of~\cite{cohen2019certified} and~\cite{salman2019provably}. Nevertheless, we notice that $\Pr(p_A)$ is skewed towards~$1$ in all cases we tested: namely, most of the mass of $\Pr(p_A)$ is concentrated in a small interval $(\beta, 1)$ on the right, while the mass outside it - and especially in the interval $[0, 0.5]$ is close to zero. Intuitively, this is the behavior we expect, where most of the data points in the sample are far from the decision boundary; otherwise, the robustness radius of a point would be very small.

Inspired by the shape of the distribution as illustrated in Fig.~\ref{fig:pA_distributions}, we form a distribution as characterized in Lemma~\ref{thm:compute}, where~$\kappa_1$ and~$\kappa_2$ shall be small, similar to that of Fig.~\ref{fig:pA_distributions}. 

\begin{lemma} \label{thm:compute}
Assume that $\Pr(p_A)$ follows a piecewise uniform distribution across input points $\mathbf{x}$, where 

\vspace{-5mm}
\begin{equation}\label{eq:img.generation}
\Pr(p_A) \defeq \begin{cases}
\kappa_1 &\text{if $p_A \in [0, 0.5)$}\\
\kappa_2 &\text{if $p_A \in [0.5, \beta)$}\\
\kappa_3 \defeq \frac{1-  \kappa_1 (0.5) -\kappa_2(\beta - 0.5)}{1-\beta} &\text{if $p_A \in [\beta, 1)$}
\end{cases}
\end{equation}

Then $\bar{R}_{\sigma}(\alpha, n) = \kappa_2 \int_{0.5}^{\beta} R_{\sigma}^{\alpha, n}(p_A)dp_A + 
  \kappa_3 \int_{\beta}^{1} R_{\sigma}^{\alpha, n}(p_A) dp_A $

\end{lemma}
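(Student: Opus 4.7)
}
The plan is to start from the definition in Eq.~\eqref{eq:def.average.radius} and split the integration domain according to the piecewise structure of $\Pr(p_A)$ given in Eq.~\eqref{eq:img.generation}. First I would recall that $R_{\sigma}^{\alpha, n}(p_A) = 0$ whenever $p_A < 0.5$, which is why Eq.~\eqref{eq:def.average.radius} already restricts the integral to $[0.5, 1]$; in particular, the value of $\kappa_1$ on $[0, 0.5)$ plays no role in the resulting expression. This lets me immediately discard the subinterval $[0, 0.5)$.

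Next I would partition the remaining integration domain $[0.5, 1]$ at the threshold $\beta$, writing
\begin{equation}
\bar{R}_{\sigma}(\alpha, n) = \int_{0.5}^{\beta} R_{\sigma}^{\alpha, n}(p_A) \Pr(p_A)\, dp_A + \int_{\beta}^{1} R_{\sigma}^{\alpha, n}(p_A) \Pr(p_A)\, dp_A.
\end{equation}
Since $\Pr(p_A)$ is constant on each of these subintervals, taking values $\kappa_2$ on $[0.5,\beta)$ and $\kappa_3$ on $[\beta,1)$, I can factor these constants out of the respective integrals to obtain the claimed expression.

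Finally, I would verify that the declared value of $\kappa_3$ in Eq.~\eqref{eq:img.generation} is consistent with $\Pr(p_A)$ being a valid probability density, by checking the normalization condition $\int_0^1 \Pr(p_A)\, dp_A = 1$. Evaluating this integral piecewise gives $\kappa_1 (0.5) + \kappa_2 (\beta - 0.5) + \kappa_3 (1 - \beta) = 1$, which upon solving for $\kappa_3$ recovers exactly the expression in the lemma statement.

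There is no real obstacle here: the lemma is essentially a restatement of the definition of expectation against a piecewise-uniform density, combined with the vanishing of the radius below $0.5$. The only care required is bookkeeping around the normalization constraint on $\kappa_3$ and noting that the $[0,0.5)$ contribution drops out for free, so this proof will be short and calculational rather than conceptual.
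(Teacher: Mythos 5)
Your proposal is correct and follows essentially the same route as the paper, which simply notes that the result follows immediately from Eq.~\eqref{eq:def.average.radius} and Eq.~\eqref{eq:img.generation}; you have merely spelled out the splitting of the integral at $\beta$, the vanishing of the radius on $[0,0.5)$, and the normalization check for $\kappa_3$. No gaps.
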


\begin{proof}
    The proof immediately follows Eq.~\eqref{eq:def.average.radius} and~\eqref{eq:img.generation}.
\end{proof}

By integrating Eq.~\eqref{eq:cert_radius_approx} into Lemma~\ref{thm:compute}, with~$\kappa_1$,~$\kappa_2$ and~$\beta$ being constants, the integrals can be calculated. 
In the following, we exemplify the computation by computing the result of robustness radius drop when $\kappa_1 = 0$ and $\kappa_2 = 0$, a case where $\Pr(p_A)$ is uniform in the interval $[\beta, 1)$ with cases $\beta \geq 0.8$ and $\beta = 0.5$.

\begin{theorem} \label{thm:aver_radius_drop}
Assume that $\Pr(p_A)$ follows a uniform distribution in the interval $[\beta, 1)$ across data points $\mathbf{x}$ in the input domain
The decrease of the average certified radius $\bar{R}_{\sigma}(\alpha, n)$ using $n$ samples from the ideal case of $n = \infty$ is approximately equal to:

\begin{equation}\label{eq:aver_radius_drop}
    r_{\sigma}(\alpha, n) \coloneqq \frac{ \bar{R}_{\sigma}(\alpha, n) }{ \bar{R}_{\sigma}(0, \infty) } \approx 1 - \Theta \frac{z_{\alpha}}{\sqrt{n}} 
\end{equation}

where 
\begin{equation}\label{eq:theta.value}
\Theta \defeq \begin{cases}
1.64 &\text{if $\beta \in [0.8, 1)$}\\
2 &\text{if $\beta = 0.5$}
\end{cases}
\end{equation}

\end{theorem}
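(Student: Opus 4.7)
The plan is to combine Lemma~\ref{thm:compute} with the explicit expression for $R_{\sigma}^{\alpha,n}(p_A)$ from Theorem~\ref{thm:radius_drop} and then evaluate the resulting integrals. Under the uniform-on-$[\beta,1)$ assumption we set $\kappa_1 = \kappa_2 = 0$ in Lemma~\ref{thm:compute}, giving $\kappa_3 = 1/(1-\beta)$ and reducing the average radius to
\[
\bar{R}_{\sigma}(\alpha,n) \;=\; \frac{1}{1-\beta}\int_{\beta}^{1} R_{\sigma}^{\alpha,n}(p_A)\,dp_A.
\]

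Substituting Eq.~\eqref{eq:radius_drop} and using linearity of the integral, I would split the integrand into its infinite-sample part and its $z_{\alpha}/\sqrt{n}$ correction. Introducing
\[
J_1(\beta) \defeq \int_{\beta}^{1}\!\bigl[p^{0.135} - (1-p)^{0.135}\bigr]dp,
\]
\[
J_2(\beta) \defeq \int_{\beta}^{1}\!\bigl[p^{-0.365}(1-p)^{1/2} + p^{1/2}(1-p)^{-0.365}\bigr]dp,
\]
the factor $5.063\,\sigma/(1-\beta)$ is common to both $\bar{R}_{\sigma}(\alpha,n)$ and $\bar{R}_{\sigma}(0,\infty)$ (the latter being simply $\tfrac{5.063\sigma}{1-\beta}J_1(\beta)$ by Eq.~\eqref{eq:r.a.infty}), so it cancels in the ratio and yields
\[
r_{\sigma}(\alpha,n) \;\approx\; 1 \;-\; 0.135\,\frac{z_{\alpha}}{\sqrt{n}}\cdot\frac{J_2(\beta)}{J_1(\beta)}.
\]
This identifies $\Theta = 0.135\,J_2(\beta)/J_1(\beta)$, and the remaining task is to establish the two claimed numerical values.

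For $\beta = 0.5$ I would exploit the $p \leftrightarrow 1-p$ symmetry on $[0,1]$: one gets $J_1(0.5) = (1 - 2\cdot 0.5^{1.135})/1.135$, and after substituting $u = 1-p$ in the first summand of $J_2$, the two halves combine into $J_2(0.5) = \int_0^1 p^{1/2}(1-p)^{-0.365}dp = B(3/2,\,0.635)$, an ordinary Beta function. Plugging in these closed forms gives $\Theta \approx 2$. For $\beta \in [0.8,1)$ no such symmetry is available; $J_1(\beta)$ and $J_2(\beta)$ reduce to incomplete Beta functions that must be evaluated numerically. I would tabulate both at a grid, e.g.\ $\beta \in \{0.80, 0.85, 0.90, 0.95\}$, and show that $J_2(\beta)/J_1(\beta)$ stays close to $1.64/0.135 \approx 12.15$ throughout the range.

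The main obstacle is exactly this $\beta \in [0.8,1)$ case. Because $(1-p)^{-0.365}$ produces an integrable singularity in $J_2$ as $p \to 1$ while $J_1$ vanishes only linearly as $\beta \to 1$, one cannot rule out by algebra alone that the ratio blows up near the right endpoint; the approximate constancy claimed by the theorem is an empirical numerical observation rather than a closed-form identity. A fully rigorous version would either pin down $\Theta$ at a single representative $\beta$ or state explicit numerical bounds on $|\Theta - 1.64|$ over $[0.8,1)$. Everything else in the derivation is routine: linearity of expectation, the binomial-series manipulation already carried out in Theorem~\ref{thm:radius_drop}, and the reduction of Lemma~\ref{thm:compute} for the degenerate choice of weights.
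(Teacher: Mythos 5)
Your proof is correct and, in substance, it is the paper's proof: for $\beta=0.5$ the paper likewise substitutes Eq.~\eqref{eq:radius_drop} into the integral, evaluates the $p_A^a$ terms directly and the mixed terms as Beta functions, obtaining $\bar{R}_{\sigma}(\alpha,n)=\sigma(0.796-1.603\,z_{\alpha}/\sqrt{n})$ and hence $\Theta\approx 2$ --- numerically identical to your $0.135\,J_2(0.5)/J_1(0.5)$, and your symmetry reduction of $J_2(0.5)$ to $B(3/2,0.635)$ is a clean way to organize that computation. The one genuine difference is in the $\beta\in[0.8,1)$ case: the paper does not form the ratio of integrals $J_2(\beta)/J_1(\beta)$ but instead considers the \emph{pointwise} ratio $R_{\sigma}^{\alpha,n}(p_A)/R_{\sigma}^{0,\infty}(p_A)=1-0.135\,(z_{\alpha}/\sqrt{n})\,h(p_A)$, observes numerically (Fig.~\ref{fig:r_quotient}) that $h(p_A)\approx 12.14$ on $(\beta,1)$, and pulls this constant out of the integral. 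That variant buys something your decomposition does not: since the constant is extracted before integrating against $\Pr(p_A)$, the conclusion for $\beta\ge 0.8$ holds for an \emph{arbitrary} density supported on $[\beta,1)$, not only the uniform one (the paper remarks on this explicitly), whereas your $J_2/J_1$ is the exact weighted average for the uniform case but is tied to it. Your caveat about the right endpoint is well taken and applies equally to the paper's argument: $h(p_A)$ contains $p_A^{1/2}(1-p_A)^{-0.365}$ and so diverges as $p_A\to 1$ (and correspondingly $J_2(\beta)/J_1(\beta)\sim c\,(1-\beta)^{-0.365}$ as $\beta\to 1$), so the ``almost constant'' claim is a numerical observation valid away from the endpoint rather than a closed-form identity; the paper does not quantify $|\Theta-1.64|$ over $[0.8,1)$ either, so your proposal is no less rigorous than the original on this point.
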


\begin{proof} We proceed with the proof by separating cases.

\noindent \textbf{[Case 1 - $\beta \in [0.8, 1)$]}  Recall that Eq.~\eqref{eq:radius_drop} gives us $R_{\sigma}^{\alpha, n}(p_A)$ for a particular point with class probability $p_A$, while Eq.~\eqref{eq:r.a.infty} provides the result of   $R_{\sigma}^{0,\infty}(p_A)$. Consider the ratio characterized below. 

\begin{equation}\label{eq:r.r.ratio}
\begin{split}
    \frac{R_{\sigma}^{\alpha, n}(p_A)}{R_{\sigma}^{0,\infty}(p_A)} = 1 - 0.135 \frac{z_{{\alpha}}}{\sqrt{n}} h(p_A)
\end{split}
\end{equation}

where
\begin{equation}
\begin{split}
    h(p_A) = \frac{p_A^{-0.365} (1 - p_A)^{1/2} + p_A^{1/2} (1 - p_A)^{-0.365}}{p_A^{0.135} - (1 - p_A)^{0.135}}
\end{split}
\end{equation}

 Crucially, $h(p_A)$ is almost constant within an interval close to $1$, as illustrated in Fig.~\ref{fig:r_quotient}. For instance, in the interval $(\beta, 1)$ with $\beta \geq 0.8$, we find $h(p_A) \approx 12.14$. Substituting this value inside Eq.~\eqref{eq:r.r.ratio}, we obtain:

\begin{figure}[t]
\centering
\vspace{-6mm}
\includegraphics[width=0.9\columnwidth]{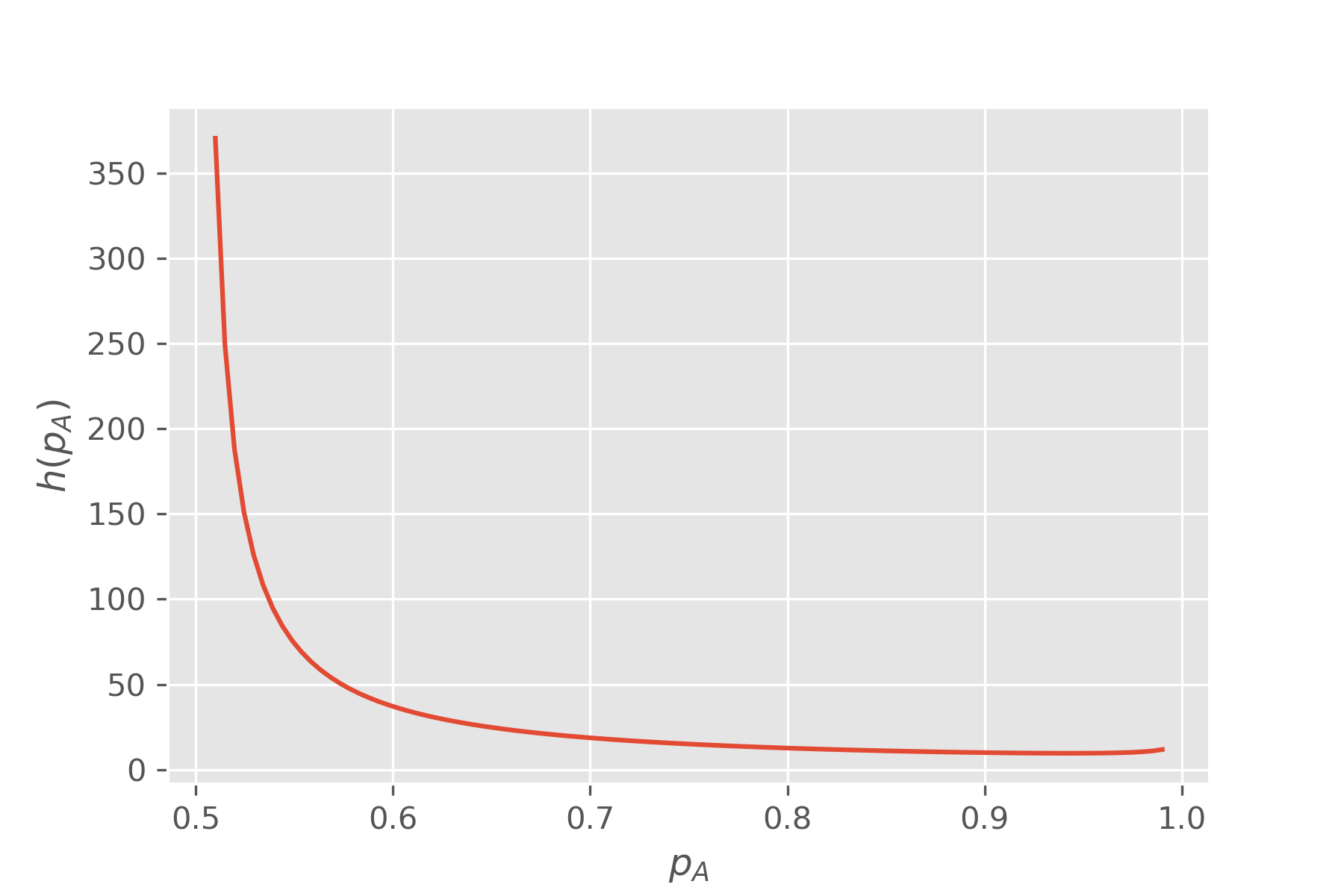}
\caption{Plot of $h(p_A)$ in the interval $[0.5, 1]$}
\label{fig:r_quotient}
\end{figure}

\begin{equation}
    \frac{R_{\sigma}^{\alpha, n}(p_A)}{R_{\sigma}^{0,\infty}(p_A)} \approx 1 - 1.64 \frac{z_{{\alpha}}}{\sqrt{n}}
\end{equation}

Therefore:
\vspace{-5mm}
\begin{equation}\label{eq:proof}
\begin{split}
    \bar{R}_{\sigma}(\alpha, n) = \int_{0}^{1} R_{\sigma}^{\alpha, n}(p_A) \Pr(p_A) dp_A \\
    \approx (1 - 1.64 \frac{z_{{\alpha}}}{\sqrt{n}}) \int_{\beta}^{1} R_{\sigma}^{0,\infty}(p_A) \Pr(p_A) dp_A  \\
    =   (1 - 1.64 \frac{z_{{\alpha}}}{\sqrt{n}}) \int_{0}^{1} R_{\sigma}^{0,\infty}(p_A) \Pr(p_A) dp_A  \\
=
 (1 - 1.64 \frac{z_{{\alpha}}}{\sqrt{n}}) \bar{R}_{\sigma}(0, \infty)
\end{split}
\end{equation}

In Eq.~\ref{eq:proof}, the equality of expanding the integral from~$\int_{\beta}^{1} $ to~$\int_{0}^{1}$ comes from the fact that $\Pr(p_A)=0$ when $p_A\in [0, \beta)$. As $\int_{\beta}^{1} R_{\sigma}^{0,\infty}(p_A) \Pr(p_A) dp_A$ is exactly the definition of $\bar{R}_{\sigma}(0, \infty)$, we obtain the required formula. Interestingly, the derivation in this case holds for density functions $Pr[p_A]$ in $[\beta, 1)$ of any form.

\vspace{2mm}
\noindent \textbf{[Case 2 - $\beta = 0.5$]} 
When $\beta = 0.5$, $\Pr(p_A) = 2$ when $p_A \in [0, 1)$. For the average robustness radius, we get:

\vspace{-2mm}
\begin{equation}
\begin{split}
    \bar{R}_{\sigma}(\alpha, n)  
    = \int_{0}^{1} R_{\sigma}^{\alpha, n}(p_A) \Pr(p_A) dp_A = 2 \int_{0.5}^{1} R_{\sigma}^{\alpha, n}(p_A)dp_A 
  \end{split}
\end{equation}

Substituting Eq.~\eqref{eq:radius_drop}, we can perform the integration and obtain:

\vspace{-3mm}
\begin{equation}\label{eq.r.intermediate}
\begin{split}
    \bar{R}_{\sigma}(\alpha, n) = 2 \int_{0.5}^{1} 5.063 \sigma [ p_A^{0.135} - (1 - p_A)^{0.135} \\
    - 0.135 \frac{z_{\alpha}}{\sqrt{n}} (p_A^{-0.365} (1 - p_A)^{1/2} + p_A^{1/2} (1 - p_A)^{-0.365}) ] dp_A 
\end{split}
\end{equation}

The integrals of the form $p_A^a$ and $(1 - p_A)^a$ can be computed easily, while the integrals of the terms $p_A^a (1 - p_A)^b$ are integrals of the Beta function, and can be evaluated numerically. Based on the calculations, we get: 

\vspace{-3mm}
\begin{equation}
\begin{split}
    & \bar{R}_{\sigma}(\alpha, n) = \sigma \left( 0.796 - 1.603 \frac{z_{\alpha}}{\sqrt{n}} \right)
\end{split}
\end{equation}

Finally, diving the terms, we see that the ratio of robustness radius drop is independent of $\sigma$, and is approximately equal to:

\begin{equation}
    \frac{ \bar{R}_{\sigma}(\alpha, n) }{ \bar{R}_{\sigma}(0, \infty) } \approx 1 - 2 \frac{z_{\alpha}}{\sqrt{n}} 
\end{equation}
which concludes the proof.

\end{proof}

\vspace{-5mm}

\paragraph{(Observation)} (i) Note that even when one changes the distribution and falls back to the form in Eq.~\ref{eq:img.generation}, the computation in Eq.~\ref{eq.r.intermediate} will be changed, but in the final computation of $\frac{ \bar{R}_{\sigma}(\alpha, n) }{ \bar{R}_{\sigma}^{\infty} }$, $\sigma$ appearing on both the denominator and the numerator will be canceled out, implying that the average robustness radius drop is \emph{independent of the noise level}~$\sigma$. (ii) The two cases we have demonstrated hint at a tendency where~$\Theta$ shall fall inside the interval~$[1.64, 2]$, when the distribution adjusts from uniform to a small peak.

\subsection{Certified Accuracy Drop}
Apart from the average robustness radius, another important quantity in evaluating robust classifiers is the average certified accuracy: We denote it by $acc_{R}$, which is the fraction of points that~$g_{\sigma}$ classifies correctly, and with robustness radius at least larger than a threshold~$R$.

To understand the situation, consider again the distribution of $\Pr(p_A)$, and assume that we are evaluating $acc_{R_0}$ for some radius of interest $R_0$. By Eq.~\eqref{eq:R}, this corresponds to a probability $p_0$:

\begin{equation}
    R_0 = \sigma \Phi^{-1}(p_0) \Leftrightarrow \\
    p_0 = \Phi(R_0 / \sigma)
\end{equation}

By applying similar simplifying assumptions on~$\Pr(p_A)$ as in Thm.~\ref{thm:aver_radius_drop}, we can obtain an upper bound on the certified accuracy drop. One can easily extend the result to accommodate the distribution characterized in Eq.~\eqref{eq:img.generation}.

\begin{theorem} \label{thm:cert_acc_drop}
Let $acc_{R_0}(\alpha, n)$ be the certified accuracy $g_{\sigma}$ obtains using~$n$ samples and error rate~$\alpha$, and let $acc_{R_0}$ be the ideal case where $n = \infty$. Assume that $\Pr(p_A)$ follows a uniform distribution in the interval $[0.5, 1)$ across input points $\mathbf{x}$. The certified accuracy drop, $\Delta acc_{R_0}(\alpha, n) = acc_{R_0} - acc_{R_0}(\alpha, n)$ satisfies:

\vspace{-2mm}
\begin{equation}\label{eq:cert_acc_drop}
    \Delta acc_{R_0}(\alpha, n) \leq \frac{z_{\alpha}}{\sqrt{n}} 
\end{equation}
\end{theorem}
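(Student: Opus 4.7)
The plan is to reuse the per-point expected radius formula from Thm.~\ref{thm:radius_drop}. Since $\Pr(p_A)=2$ on $[0.5,1)$ and zero elsewhere, the ideal certified accuracy is
\begin{equation*}
acc_{R_0} \;=\; \int_{p_0}^{1} \Pr(p_A)\,dp_A \;=\; 2(1-p_0),
\qquad p_0 = \Phi(R_0/\sigma).
\end{equation*}
For finite $n$, Thm.~\ref{thm:radius_drop} gives $R_{\sigma}^{\alpha,n}(p_A) \approx \sigma\,\Phi^{-1}(p_A - t_{\alpha,n})$ with $t_{\alpha,n} = z_\alpha\sqrt{p_A(1-p_A)/n}$. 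The first step is therefore to translate the event ``point $\mathbf{x}$ is $R_0$-certified with $n$ samples'' into a statement purely about $p_A$.

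Using monotonicity of $\Phi^{-1}$, a point survives at radius $R_0$ exactly when $p_A - t_{\alpha,n}(p_A) \geq p_0$, i.e.\ $p_A \geq p_0 + t_{\alpha,n}(p_A)$. The lost mass compared to the ideal case is then
\begin{equation*}
\Delta acc_{R_0}(\alpha,n) \;=\; \int_{p_0}^{p_0 + t_{\alpha,n}(p_A)} \Pr(p_A)\,dp_A,
\end{equation*}
where the upper limit is implicit (it depends on $p_A$). The next step is to replace this implicit upper limit by a uniform upper bound on $t_{\alpha,n}$, so the integral becomes elementary.

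For the uniform bound I would use the standard inequality $p_A(1-p_A)\leq 1/4$ on $[0,1]$, giving $t_{\alpha,n}(p_A) \leq z_\alpha/(2\sqrt{n})$ for every $p_A\in[0.5,1)$. Consequently the lost region is contained in the interval $[p_0,\,p_0 + z_\alpha/(2\sqrt{n})]$, and since $\Pr(p_A)=2$ there, the lost mass is at most $2\cdot z_\alpha/(2\sqrt{n}) = z_\alpha/\sqrt{n}$, which is exactly Eq.~\eqref{eq:cert_acc_drop}. The boundary case $p_0 + z_\alpha/(2\sqrt{n}) > 1$ is handled trivially because the drop cannot exceed the remaining mass $2(1-p_0) \leq z_\alpha/\sqrt{n}$ in that regime.

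The main obstacle, as in Thm.~\ref{thm:aver_radius_drop}, is not algebraic but conceptual: we must argue that using the \emph{expected} lower bound from Lemma~\ref{thm:lower_bound_clt} in place of the random quantity $\bar p_A^{CP}$ is legitimate when defining $acc_{R_0}(\alpha,n)$. I would address this by appealing to the same two negligibility assumptions already invoked in Thm.~\ref{thm:radius_drop} (that $\delta$ and $|\Phi^{-1\prime\prime}(p)|\,\mathrm{\textbf{Var}}[\hat p]$ are small), so that the certified radius for a point with true probability $p_A$ behaves, in expectation, as $\sigma\Phi^{-1}(p_A - t_{\alpha,n})$. Given this, the remainder is the one-line interval-length estimate above; no integration of Beta-type terms is required, which makes this theorem noticeably simpler than Thm.~\ref{thm:aver_radius_drop}.
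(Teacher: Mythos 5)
Your proposal is correct and follows essentially the same route as the paper's own proof: translate the certification event into the condition $p_A \geq p_0 + t_{\alpha,n}$, bound $t_{\alpha,n} \leq z_\alpha/(2\sqrt{n})$ via $p_A(1-p_A)\leq 1/4$, and multiply the resulting interval length by the uniform density $2$. Your added remarks on the boundary case $p_0 + z_\alpha/(2\sqrt{n}) > 1$ and on the legitimacy of substituting the expected lower bound are sensible refinements that the paper leaves implicit, but they do not change the argument.
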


\begin{proof}
Let $p_0 = \Phi(R_0 / \sigma)$; then, for $acc_{R_0}$ we have that:

\begin{equation}
    acc_{R_0} = \int_{p_0}^{1} \Pr(p_A) dp_A
\end{equation}

Nevertheless, when we use~$n$ samples, we can measure only the $(1-\alpha)$-lower bound of~$p_A$, which, by Theorem~\ref{thm:lower_bound_clt}, is approximately equal to: $\bar{p_A}^{CP} = p_A - t_{\alpha, n}$. 

So, now a point will be included in the integration if we have~$\bar{p_A}^{CP} \geq p_0$. Via syntactic rewriting, we have

\vspace{-5mm}
\begin{equation}
\begin{split}
    \bar{p_A}^{CP} \geq p_0 \Rightarrow 
    p_A - t_{\alpha, n} \geq p_0 \Rightarrow 
    p_A \geq p_0 + t_{\alpha, n}
\end{split}
\end{equation}

For $t_{\alpha, n}$ we notice that:
\vspace{-2mm}
\begin{equation}
\begin{split}
    t_{\alpha, n} = z_{\alpha} \sqrt{ \frac{p_A (1 - p_A)}{n} } \Rightarrow  
    t_{\alpha, n} \leq \frac{z_{\alpha}}{2 \sqrt{n}}
\end{split}
\end{equation}
since the quantity $p_A (1 - p_A)$ with $p_A \in [0, 1]$ is maximized for $p_A = 0.5$, and has value $1/4$.

Hence, all points satisfying $p_A \geq p_0 + \frac{z_{\alpha}}{2 \sqrt{n}}$ will be included in the integration, and the interval that will be excluded will be at most $[p_0, p_0 + \frac{z_{\alpha}}{2 \sqrt{n}}]$. So, we finally obtain:
\vspace{-2mm}
\begin{equation}
\begin{split}
    \Delta acc_{R_0}({\alpha}, n) \leq \int_{p_0}^{1} \Pr(p_A) dp_A - \int_{p_0 + \frac{z_{\alpha}}{2 \sqrt{n}}}^{1} \Pr(p_A) dp_A \Rightarrow \\
    \Delta acc_{R_0}(\alpha, n) \leq \int_{p_0}^{p_0 + \frac{z_{\alpha}}{2 \sqrt{n}}} \Pr(p_A) dp_A
\end{split}
\end{equation}

Under the assumption that $\Pr(p_A)$ is uniform in $[0.5, 1)$, we have $\Pr(p_A) = 2$, and the last integral is simply $\frac{z_{\alpha}}{2 \sqrt{n}} \times 2$, which yields the required result. 
\end{proof}

\section{Evaluation}\label{sec:experiments}

\begin{table}[t]
  \centering
  \caption{Average robustness radius for each noise level $\sigma$ and sample size $n$ on CIFAR-10, for the models of \cite{cohen2019certified} (with $\alpha = 0.001$)}
  \label{tab:cohen_cifar}
  
  \begin{tabular}{| c || c | c | c | c | c | c | c |}
    \hline
    $\sigma / n$  & 25 & 50 & 100 & 250 & 500 & 1000 & 10000 \\
    \hline \hline
    0.12 & 0.055 & 0.091 & 0.121 & 0.154 & 0.174 & 0.192 & 0.238  \\
    \hline
    0.25 & 0.09 & 0.152 & 0.203 & 0.258 & 0.292 & 0.319 &  0.385  \\
    \hline
    0.50 & 0.119 & 0.206 & 0.276 & 0.354 & 0.395 & 0.43 & 0.501  \\
    \hline
    1.00 & 0.111 & 0.202 & 0.284 & 0.371 & 0.41 & 0.448 & 0.513  \\
    \hline
\end{tabular}
\end{table}

\begin{table}[t]
  \centering
  \caption{Average robustness radius for each noise level $\sigma$ and sample size $n$ on ImageNet, for the models of \cite{cohen2019certified} (with $\alpha = 0.001$)}
  \label{tab:cohen_imagenet}
  
  \begin{tabular}{| c || c | c | c | c | c | c | c |}
    \hline
    $\sigma / n$  & 25 & 50 & 100 & 250 & 500 & 1000 & 10000 \\
    \hline \hline
    0.25 & 0.095 & 0.155 & 0.208 & 0.266 & 0.301 & 0.333 & 0.477  \\
    \hline
    0.50 & 0.148 & 0.241 & 0.326 & 0.414 & 0.471 & 0.602 & 0.734  \\
    \hline
    1.00 & 0.189 & 0.313 & 0.425 & 0.537 & 0.603 & 0.663 & 0.875  \\
    \hline
\end{tabular}
\end{table}

In this section, we analyze experimentally the influence of $n$ on the average robustness radius and certified accuracy, and compare the theoretical results of Sec.~\ref{sec:method} with the actual measurements by running Algo.~\ref{alg:certify}. We work with the standard CIFAR-10~\cite{krizhevsky2009learning} and ImageNet~\cite{deng2009imagenet} datasets, as done in several seminal papers on RS~\cite{cohen2019certified, salman2019provably, carlini2022certified}. 

To measure the influence of~$n$ (sample size), we take the classifiers from Cohen et al.~\cite{cohen2019certified} made available for repeatability evaluation. They have trained different models that work best with the corresponding~$\sigma$. We fix $\alpha = 0.001$ which is the typical value, and measure the average robustness radius as a function of~$n$, for CIFAR-10 and ImageNet. 
We subsample every $20$-th example in CIFAR-10 and every $100$-th in ImageNet, following the protocol of~\cite{cohen2019certified}. The results can be seen in Table~\ref{tab:cohen_cifar} and~\ref{tab:cohen_imagenet}. To inspect these also visually, we plot the measured dependency of $\bar{R}_{\sigma}(\alpha, n)$ from $n$ for the different $\sigma$ values, along with the predictions of Eq.~\eqref{eq:aver_radius_drop}. For each experiment, we approximate the ratio $r_{\sigma}(\alpha, n)$ using as $\bar{R}_{\sigma}^{\infty}$ the value we obtain for $n = 100000$ samples. The results are shown in Fig.~\ref{fig:cohen_cifar_formula} and Fig.~\ref{fig:cohen_imagenet_formula}.

\begin{figure*}[htp]
    \centering
    \begin{subfigure}{0.32\textwidth}
        \includegraphics[width=\linewidth]{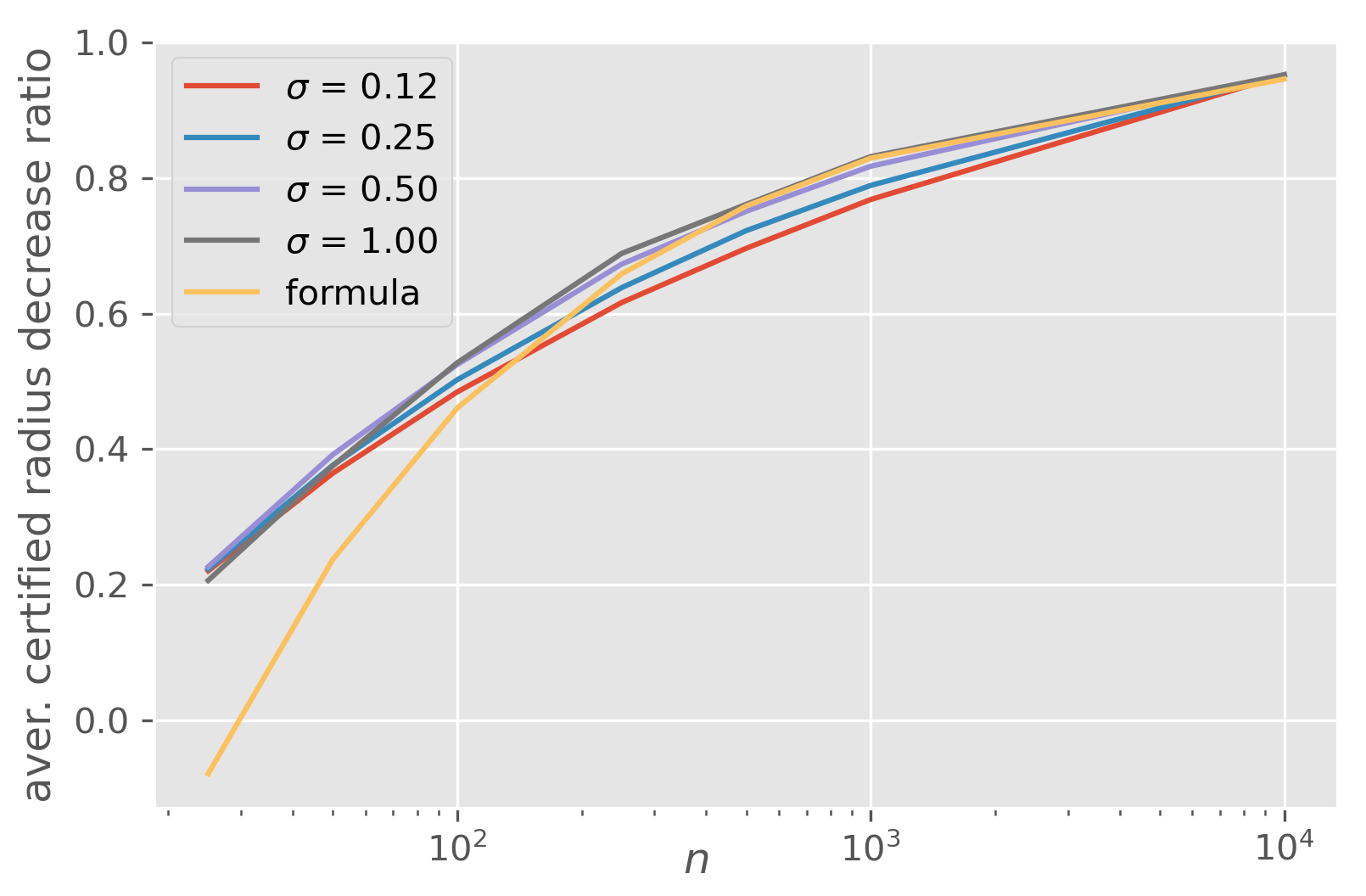}
\caption{Average robustness radius reduction for each noise level $\sigma$ and sample size $n$ on CIFAR-10, for the models of \cite{cohen2019certified} (with $\alpha = 0.001$), along with the predictions of Eq.~\eqref{eq:aver_radius_drop}}\label{fig:cohen_cifar_formula}
    \end{subfigure}
    \hfill 
    \begin{subfigure}{0.32\textwidth}
        \includegraphics[width=\linewidth]{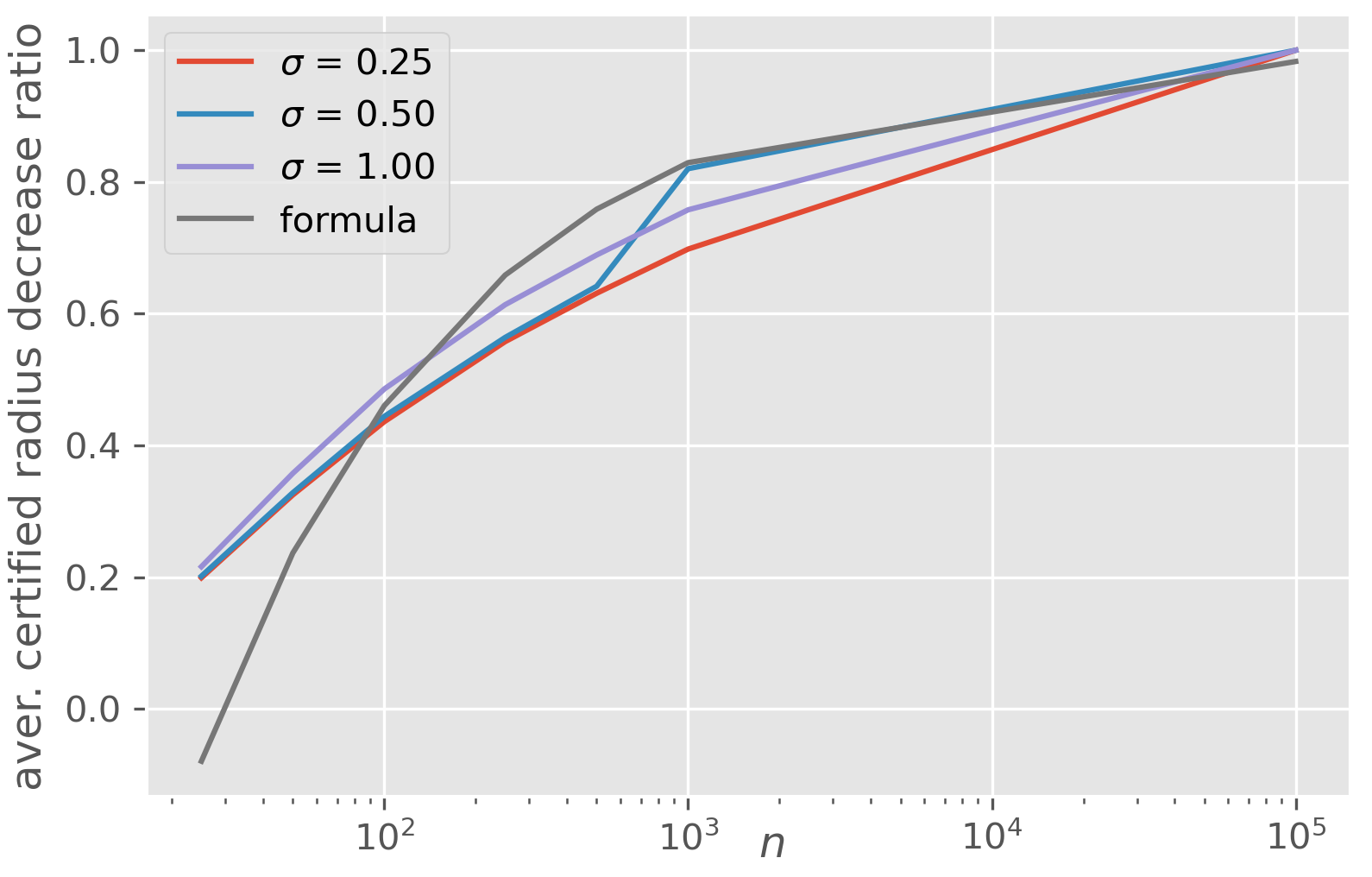}
\caption{Average robustness radius reduction for each noise level $\sigma$ and sample size $n$ on ImageNet, for the models of \cite{cohen2019certified} (with $\alpha = 0.001$), along with the predictions of Eq.~\eqref{eq:aver_radius_drop}}\label{fig:cohen_imagenet_formula}
    \end{subfigure}
    \hfill 
    \begin{subfigure}{0.32\textwidth}
        \includegraphics[width=\linewidth]{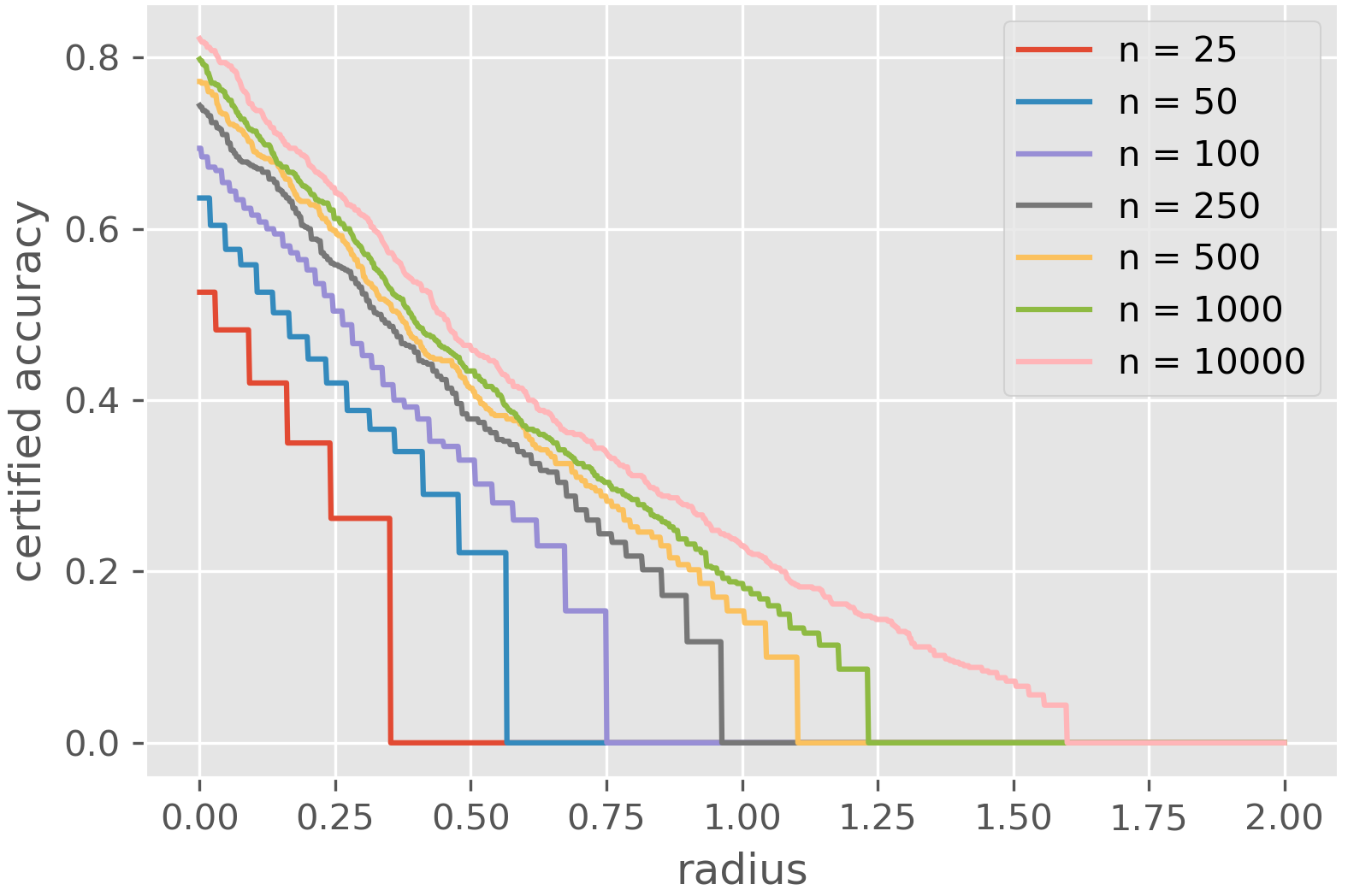}
\caption{Certified accuracy at $\sigma = 0.5$ as a function of $n$ on CIFAR-10, for the models of \cite{cohen2019certified} (with $\alpha = 0.001$)\vspace{2mm}}
\label{fig:cert_acc_cifar_cohen_sigma_050}
    \end{subfigure}

    \begin{subfigure}{0.32\textwidth}
        \includegraphics[width=\linewidth]{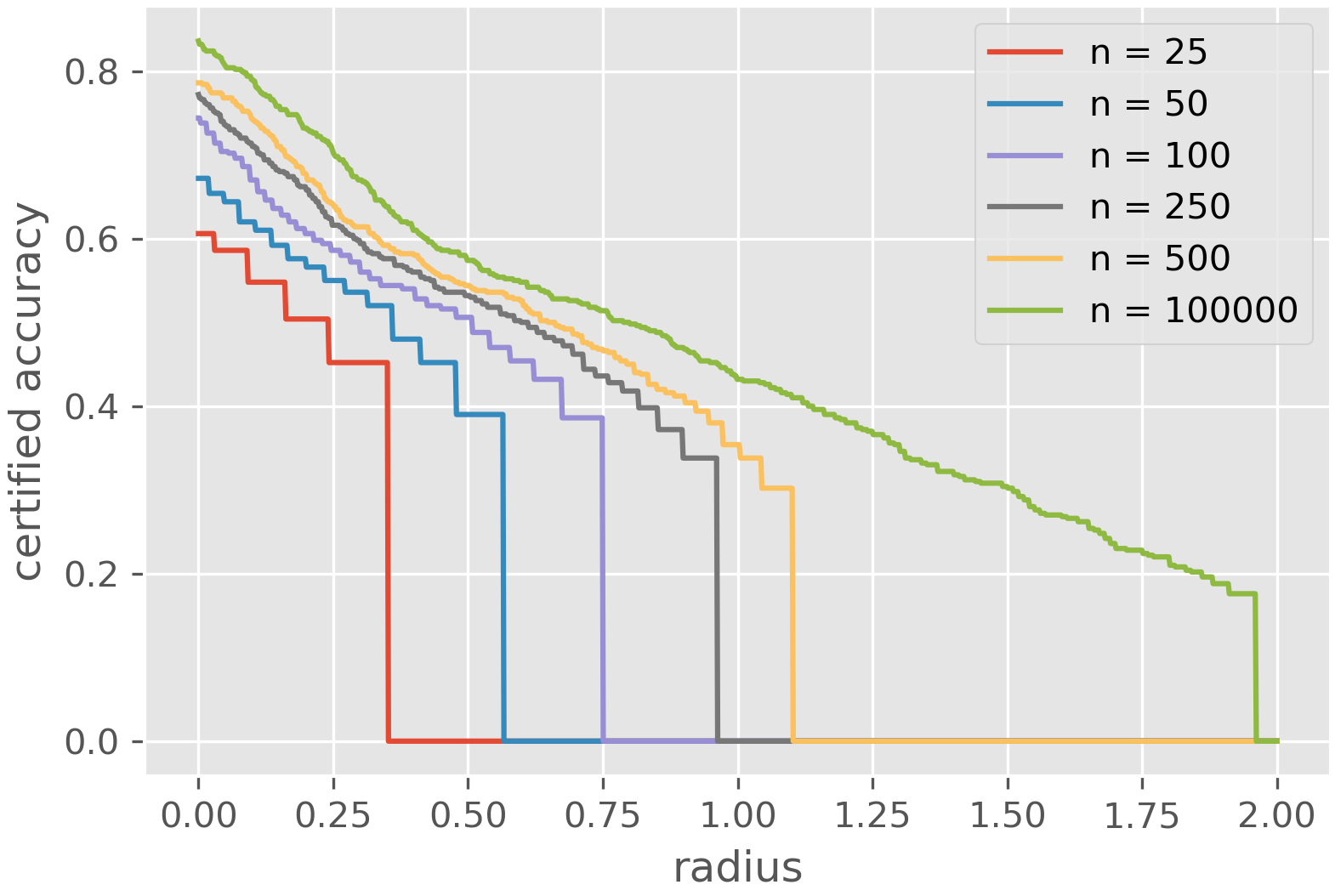}
\caption{Certified accuracy at $\sigma = 0.5$ as a function of $n$ on ImageNet, for the models of \cite{cohen2019certified} (with $\alpha = 0.001$) }
\label{fig:cert_acc_imagenet_cohen_sigma_050}
    \end{subfigure}
    \hfill 
    \begin{subfigure}{0.32\textwidth}
        \includegraphics[width=\linewidth]{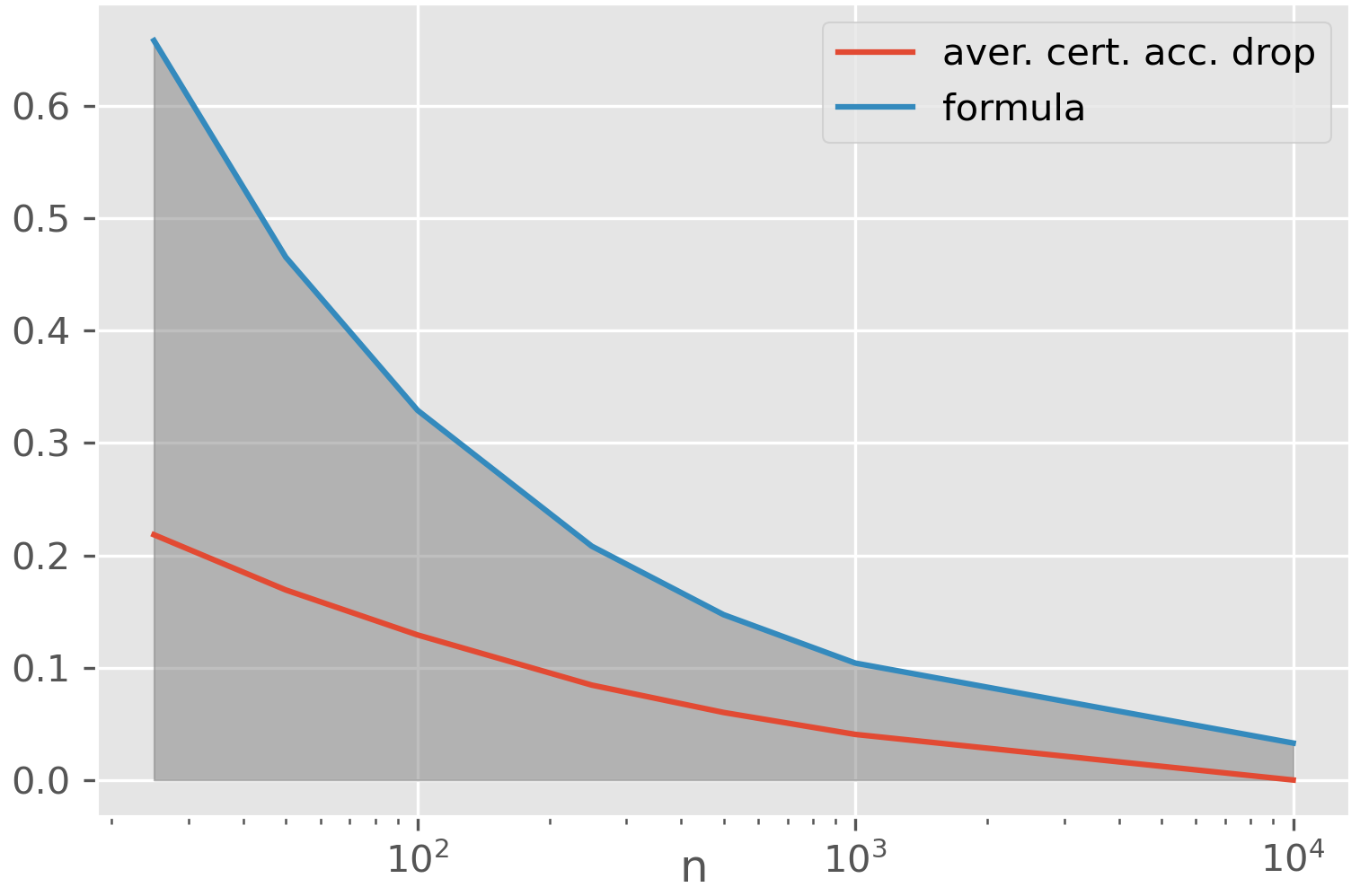}
\caption{Plot of average certified accuracy drop for the models of~\cite{cohen2019certified}, at $\sigma = 0.5$, along with the predictions of Eq.~\eqref{eq:cert_acc_drop} (CIFAR-10).}
\label{fig:cert_acc_cifar_cohen_sigma_050_vs_formula}
    \end{subfigure}
    \hfill 
    \begin{subfigure}{0.32\textwidth}
        \includegraphics[width=\linewidth]{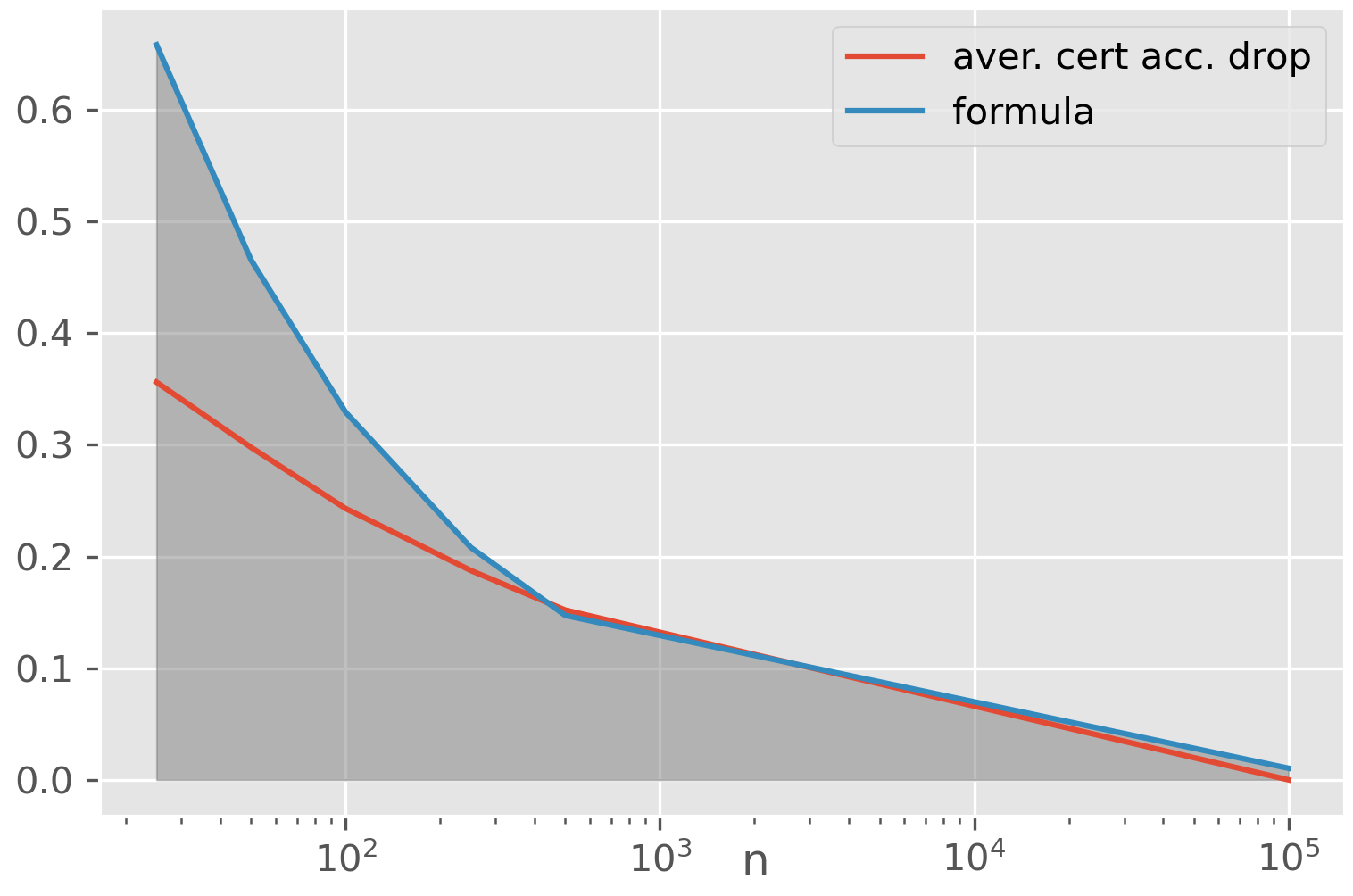}
\caption{Plot of average certified accuracy drop for the models of~\cite{cohen2019certified}, at $\sigma = 0.5$, along with the predictions of Eq.~\eqref{eq:cert_acc_drop} (ImageNet).}
\label{fig:cert_acc_imagenet_cohen_sigma_050}
    \end{subfigure}

    \caption{Evaluation results}
\end{figure*}

From Table~\ref{tab:cohen_cifar} and~\ref{tab:cohen_imagenet}, we observe that the reduced sample sizes do not decrease the average robustness radius $\bar{R}_{\sigma}(\alpha, n)$ as much as expected: for example, in the case of CIFAR-10, a $10\times$ decrement (from $10000$ to $1000$) reduces $\bar{R}$ by only around $20\%$ across noise levels $\sigma$. Moreover, a $100\times$ decrement reduces $\bar{R}$ by only $50\%$. Similarly, for the case of ImageNet, a reduction of $n$ from $100000$ to $100$ reduces $\bar{R}$ by merely $50\%$.

Analyzing the measurements across all datasets and models, the results well-support the theoretical bound characterized by Thm.~\ref{thm:aver_radius_drop}. First, the radius drop is independent of the noise level~$\sigma$; indeed, in the experiments we found approximately the same radius reduction across different~$\sigma$ values for each dataset. Second, we observe that the reduction of $\bar{R}_{\sigma}(\alpha, n)$ from $n = 10000$ to $n = 1000$ is around $\approx 85\%$, which is what we see in the experiments. Similarly, the formula shows that there is little difference for $n = 10000$ and $n = 100000$ also in agreement with the observations. On the other hand, the predicted reduction as we shift $n$ from $10000$ to $100$ is around $48\%$, which is slightly larger than the one we saw in experiments. This is to be expected, as Eq.~\eqref{eq:aver_radius_drop} captures the tendency of having a burst in the distribution (cf. Fig.~\ref{fig:pA_distributions}), and is "unaware" of the specific model and dataset details; recall that for every dataset and every value of $\sigma$, there is a corresponding distinct classifier provided by~\cite{cohen2019certified}. Thus, Eq.~\eqref{eq:aver_radius_drop} delivers decent predictions among~$2$ datasets across~$7$ different models.

Next, we want to investigate the effect of $n$ on certified accuracy. For that, we measure the certified accuracy of the models of~\cite{cohen2019certified} on CIFAR-10 and ImageNet: We choose one model and plot the certified accuracy curve for each value of~$n$. The results are shown in Fig.~\ref{fig:cert_acc_cifar_cohen_sigma_050} and Fig.~\ref{fig:cert_acc_imagenet_cohen_sigma_050},which show the certified accuracy of a model for each given radius $R_0$, $acc_{R_0}(\alpha, n)$.

We observe that the distance between the curves (e.g., the robustness radius drop) is roughly constant, until a curve drops to zero, in accordance with Eq.~\eqref{eq:cert_acc_drop}. Further, in order to compare the predictions of Eq.~\eqref{eq:cert_acc_drop} with reality, we plot the average certified accuracy drop, averaged over radii, and compare it to the theoretically expected value. This is illustrated in Fig.~\ref{fig:cert_acc_cifar_cohen_sigma_050_vs_formula} and~\ref{fig:cert_acc_imagenet_cohen_sigma_050}. The predictions of Eq.~\eqref{eq:cert_acc_drop} form a "conservative envelope" in Fig.~\ref{fig:cert_acc_cifar_cohen_sigma_050_vs_formula} and~\ref{fig:cert_acc_imagenet_cohen_sigma_050}; they are larger than the certified accuracy drop observed in practice. Notice that there is no strong guarantee that this must be so, as Thm.~\ref{thm:cert_acc_drop} relies on some simplifying assumptions that may not hold universally for all models and datasets; however, what we are mostly interested in is predicting the general trend, which Eq.~\eqref{eq:cert_acc_drop} seems to capture adequately.

\section{Conclusion}

In this paper, we addressed the challenge of the sample requirements for robustness certification based on randomized smoothing (RS). Our investigation revealed that significant reductions in the number of samples have a less severe impact on the average robustness radius and certified accuracy than previously anticipated. Through detailed empirical and theoretical analysis across multiple model architectures trained on CIFAR-10 and ImageNet, we observed consistent behavior.

Looking ahead, we see promising opportunities for applying our RS speedup techniques in AI safety. One potential application is in the robustness profiling of classifiers. Typically, such assessments would require testing with 100000 samples per input data point; however, our methodology could significantly lower the cost by enabling robustness assessments with as few as 100 samples, subsequently extrapolating to larger datasets. Moreover, the emergence of foundational models and their availability via API calls presents another area ripe for application. These models, whether Large Language Models (LLMs) or Vision-Language Models (VLMs), are susceptible to adversarial attacks. Our approach could feasibly enable robustness verification for each query without the need for~$100000$ samples, a previously untenable requirement.


\end{document}